\newcommand{\method}{\mbox{SCIP-Net}\xspace}
\newcommand{\methodlong}{stabilized continuous time inverse propensity network\xspace}
\newcommand{\ablation}{\mbox{CIP-Net}\xspace}
\newcommand\pint{\DOTSI\if@display\PRODI\else\prodi\fi\ilimits@}
\def\eqref#1{equation~\ref{#1}}
\def\Eqref#1{Equation~\ref{#1}}
\def\1{\bm{1}}
\DeclareMathAlphabet{\mathsfit}{\encodingdefault}{\sfdefault}{m}{sl}
\SetMathAlphabet{\mathsfit}{bold}{\encodingdefault}{\sfdefault}{bx}{n}
\DeclareMathOperator*{\argmin}{arg\,min}
\newcommand*\diff{\mathop{}\!\mathrm{d}}
\newtheorem{proposition}{Proposition}
\newtheorem{definition}{Definition}
\newcommand{\greentext}[1]{\textcolor{ForestGreen}{#1}}
\newcommand{\rebuttal}[1]{\textcolor{black}{#1}}
\newcommand{\cmark}{\textcolor{ForestGreen}{\ding{51}}}
\newcommand{\xmark}{\textcolor{BrickRed}{\ding{55}}}
\newcommand*\circledgray[1]{\tikz[baseline=(char.base)]{
            \node[shape=circle,draw=gray!60,fill=gray!10,thick,inner sep=1pt] (char) {\scriptsize\textsf#1};}}
\newcommand*\circledblue[1]{\tikz[baseline=(char.base)]{
            \node[shape=circle,draw=NavyBlue!60,fill=NavyBlue!10,thick,inner sep=1pt] (char) {\scriptsize\textsf#1};}}
\def\ubar#1{\underline{\sbox\tw@{$#1$}\dp\tw@\z@\box\tw@}}
\title{Stabilized Neural Prediction of\\  Potential Outcomes in Continuous Time}
\author{Konstantin Hess \& Stefan Feuerriegel\\
Munich Center for Machine Learning\\
LMU Munich\\
\texttt{\{k.hess,feuerriegel\}@lmu.de}
}
\begin{document}

\maketitle
\begin{abstract}
Patient trajectories from electronic health records are widely used to estimate \rebuttal{conditional average} potential outcomes \rebuttal{(CAPOs)} of treatments over time, which then allows to personalize care. Yet, existing neural methods for this purpose have a key limitation: while some adjust for time-varying confounding, these methods assume that the time series are recorded in \emph{discrete time}. In other words, they are constrained to settings where measurements and treatments are conducted at fixed time steps, even though this is unrealistic in medical practice. In this work, we aim to estimate \rebuttal{CAPOs} in \emph{continuous time}. The latter is of direct practical relevance because it allows for modeling patient trajectories where measurements and treatments take place at arbitrary, irregular timestamps. We thus propose a new method called \emph{\methodlong}~(\method). For this, we further derive stabilized inverse propensity weights for robust estimation of the \rebuttal{CAPOs}. To the best of our knowledge, our \method is the first neural method that performs \emph{proper adjustments for time-varying confounding in continuous time}.
\end{abstract}

\section{Introduction}


Estimating \rebuttal{conditional average} potential outcomes \rebuttal{(CAPOs)} of treatments is crucial to personalize treatment decisions in medicine \citep{Feuerriegel.2024}. Such \rebuttal{CAPOs} are increasingly predicted based on patient data from electronic health records \citep{Bica.2021b}. This thus requires methods that can model the time dimension in patient trajectories and, therefore, estimate \rebuttal{CAPOs} \textbf{\emph{over time}}.

Existing neural methods for estimating \rebuttal{CAPOs} over time primarily model the patient trajectory in \emph{discrete time} \citep[e.g.,][]{Bica.2020c,  Li.2021, Lim.2018, Melnychuk.2022}. As such, these methods make \emph{unrealistic} assumptions that both health measurements and treatments occur on a fixed, regular schedule (such as, e.g., daily or hourly). However, both health measurements and treatments take place at arbitrary, irregular timestamps based on patient needs. For example, patients in a critical state may be subject to closer monitoring, so that measurements are recorded more frequently \citep{Allam.2021}.

To account for arbitrary, irregular timestamps of both health measurements and treatments, methods are needed that correctly model the patient trajectory in \emph{continuous time} \citep{Lok.2008, Rosyland.2011, Rytgaard.2022}. However, neural methods that operate in continuous time are scarce (see Sec.~\ref{sec:related_work}). Crucially, existing neural methods have a key limitation in that they \emph{fail to properly account for time-varying confounding} \citep[e.g.,][]{Seedat.2022}. This means that, for a sequence of future treatments, the corresponding confounders lie also in the future, are thus unobserved, and therefore need to be adjusted for. Yet, existing neural methods rely only on heuristics such as balancing, which targets an improper estimand and thus leads to estimates that are \emph{biased}. To the best of our knowledge, there is \underline{no} neural model that estimates \rebuttal{CAPOs} in continuous time while properly adjusting for time-varying confounding.

In this paper, we aim to estimate \rebuttal{CAPOs} for sequences of treatments \textbf{\emph{in continuous time}} while properly adjusting for time-varying confounding. However, this is a non-trivial challenge, as this requires a method that can perform adjustments at arbitrary timestamps. While there are methods to adjust for time-varying confounding in discrete time, similar methods for continuous time are still lacking. Therefore, we first derive a tractable expression for inverse propensity weighting (IPW) in continuous time. However, a direct application of IPW may suffer from severe overlap violations and thus lead to extreme weights. As a remedy, we further derive \textbf{stabilized IPW in continuous time}. We then use our stabilized IPW to propose a novel method, which we call \textbf{\methodlong~(\method)}. Unlike existing methods, ours is the first neural method to estimate \rebuttal{CAPOs} in continuous time while properly adjusting for time-varying confounding.

We make the following contributions:
\footnote{Code is available at {\url{https://github.com/konstantinhess/SCIP-Net}}.}
(1)~We introduce \method, a novel neural method for estimating \rebuttal{conditional average} potential outcomes in continuous time. (2)~We derive a tractable version of IPW in continuous time, which provides the theoretical foundation of our paper for proper adjustments for time-varying confounding. Further, we propose {stabilized IPW in continuous time}, which we then use in our \method. (3)~We demonstrate through extensive experiments that our \method outperforms existing neural methods.

\section{Related Work}\label{sec:related_work}

\begin{table}
\vspace{-0.4cm}
    \centering
    \begin{adjustbox}{width=\textwidth}
        \tiny
        \begin{tabular}{l cc l}
        \toprule
        & \textbf{Correct} & \textbf{Adjustment for} & \textbf{Existing works} \\
        & \textbf{timestamps?} & \textbf{time-varying confounding?} &  \\
        \midrule
        \multirow{2}{*}{{\circledblue{1} Neural methods in discrete time}} & \xmark & \xmark & \textbf{CRN} \citep{Bica.2020c}, \textbf{CT} \citep{Melnychuk.2022} \\
        & \xmark & \cmark & \textbf{RMSNs} \citep{Lim.2018}, \textbf{G-Net} \citep{Li.2021}\\
        \midrule
        {\circledblue{2} Neural methods in continuous time} & \cmark & \xmark & \textbf{TE-CDE} \citep{Seedat.2022} \\
        \midrule
        \textbf{\method (ours)} & \cmark & \cmark & --- \\
        \bottomrule
        \end{tabular}
    \end{adjustbox}
    \caption{Comparison of key neural methods for estimating \rebuttal{CAPOs} over time. Our \method is the first method to perform proper adjustments for time-varying confounding in continuous time.}
    \vspace{-0.4cm}
    \label{tab:table_method_overview}
\end{table}

Table~\ref{tab:table_method_overview} presents an overview of key neural methods for estimating \rebuttal{CAPOs} over time. An extended related work is in Supp.~\ref{appendix:related_work}.

\textbf{Average vs. individualized estimation:} Estimating \emph{average} potential outcomes over time is a well-studied problem in classical statistics \citep[e.g.,][]{Bang.2005, Lok.2008, Robins.1986,  Robins.1999, Robins.2009, Rosyland.2011, Rytgaard.2022, Rytgaard.2023, vanderLaan.2012}. However, these methods are \emph{population-level} approaches and thus do \emph{not} make individualized estimates at the patient level. Put simply, the observed history of an individual patient is ignored. Therefore, they are \emph{not} suitable for personalized medicine. In contrast, our work (and the following overview) focuses on potential outcome estimation \emph{conditional} on the observed patient history, which thus allows us to make individual-level estimates for personalized medicine.

\circledblue{1}~\textbf{Neural methods in discrete time:} Some neural methods for estimating \rebuttal{CAPOs} over time impose a \emph{discrete time} model on the data \citep[e.g.,][]{Bica.2020c,  Li.2021,Lim.2018, Melnychuk.2022}. As such, these methods operate under the assumption of both fixed observation and treatment schedules, yet which is unrealistic in clinical settings. Instead, patient health is typically monitored at arbitrary, irregular timestamps, and the timing of treatments also takes place at arbitrary, irregular timestamps, which may directly depend on the health condition of a patient. Hence, methods in discrete time rely on a data model that is not flexible enough to account for arbitrary, irregular monitoring and treatment times, because of which their suitability in medical practice is limited.

\circledblue{2}~\textbf{Neural methods in continuous time:} Only few neural methods have been developed for estimating \rebuttal{CAPOs} in \emph{continuous time}. Yet, existing methods have key limitations. One stream of methods  \citep{Hess.2024,Vanderschueren.2023} ignores time-varying confounding and is thus \underline{not} applicable to our setting. 

To the best of our knowledge, there is only one neural method that works in continuous time and that is applicable to our setting: TE-CDE \citep{Seedat.2022}. This method tries to handle time-varying confounding through balancing. However, balancing is a heuristic approach to adjust for time-varying confounding; in fact, balancing was originally proposed for variance reduction \citep{Johansson.2016} and may even increase bias \citep{Melnychuk.2024}. Therefore, TE-CDE suffers from an \emph{infinite data bias} that comes from the fact that is does \emph{not} properly adjust for time-varying confounding.

\textbf{Research gap:} To the best of our knowledge, \underline{none} of the above neural methods performs proper adjustments for time-varying confounding in continuous time. As a remedy, we propose \method, which is the first neural method that estimates \rebuttal{CAPOs} in continuous time while properly adjusting for time-varying confounding.

\section{Problem Formulation}\label{sec:problem}

\begin{wrapfigure}{r}{0.35\textwidth}
\vspace{-0.5cm}
  \centering
  \includegraphics[width=0.35\textwidth, trim=5.3cm 19.7cm 6.8cm 6.0cm, clip]{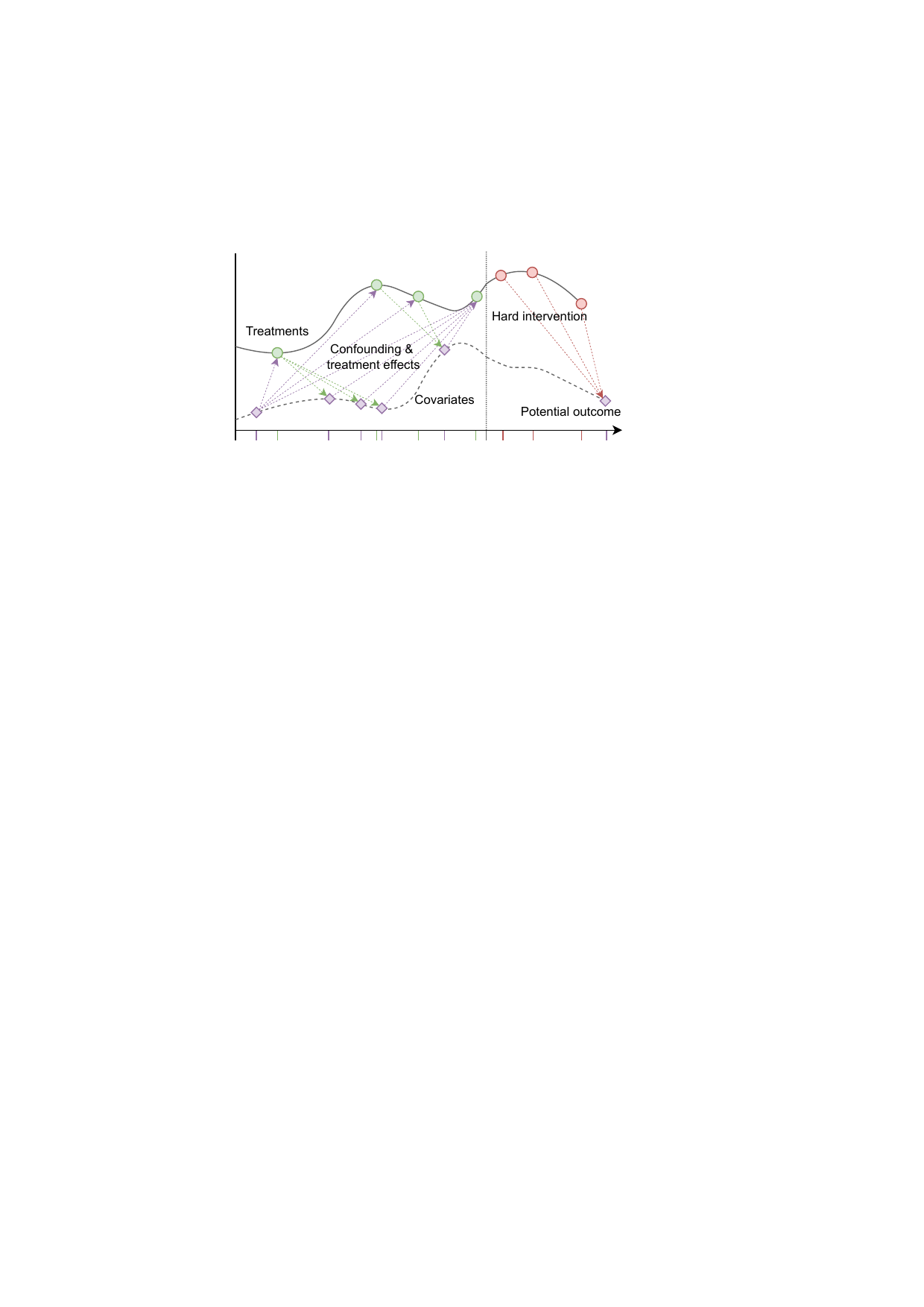}
\vspace{-0.1cm}
\caption{\textbf{Setup.} Shown are treatment and covariate trajectories in \emph{continuous time}. Observational treatment assignments are confounded by covariates and, hence, estimating CAPOs requires adjustments.}
\vspace{-0.5cm}
\label{fig:setup}
\end{wrapfigure}

\textbf{Notation:} Let $[0, \tau]$ be the time window. In the following, we assume every stochastic process ${V_t=V(t)}$ defined on $[0,\tau]$ to be \emph{càdlàg}, and we let $V_{t-} = \lim_{s\to t} V_s$ denote the left time limit. Further, we write $\bar{V}_t = \{V_s\}_{s\leq t}$ and $\ubar{V}_t=\{V_s\}_{s\geq t}$.

\textbf{Setup (see Fig.~\ref{fig:setup}):} We consider outcomes $Y_t \in \mathbb{R}^{d_y}$, discrete treatments $A_t \in \{0,1\}^{d_a}$, and covariates $X_t \in \mathbb{R}^{d_x}$, where we assume that $X_t$ contains $Y_s$ for all $s< t$. Without loss of generality, we assume that static covariates are included in $X_t$. At time $t$, we let $\pi_{0,t}(A_t)$ and $\mu_{0,t}(X_t)$ denote the observational treatment propensity and the covariate distribution, respectively. Both measurement times and treatment times are typically dynamic and do not follow fixed schedules. Rather, both are recorded at \emph{arbitrary, irregular timestamps}. 

To formalize the above in continuous time, we need to be able to model arbitrary timestamps, which increases the complexity compared to the discrete time setting considerably. Following \cite{Rytgaard.2022, Rytgaard.2023}, we let the counting processes $N_t^x$ and $N_t^a$ govern the times at which covariates are measured and at which treatments may be administered, respectively. For both ${z \in \{a,x\}}$, we let $\mathcal{T}_\tau^z=\{T_1^z,\ldots,T_{N_\tau^z}^z\}$ denote the set of jumping times of the process $N_t^z$ with $T_{N_\tau^z}^z \leq \tau$. Further, we let $\Lambda_0^z$ denote the cumulative intensity of $N_t^z$, and we let $\lambda_0^z(t)$ denote the corresponding intensity function. Further, we use the short-hand notation $\diff t = [t, t + \diff t)$ as in \citep{Gill.1990,Rytgaard.2022}. Then, we have that
\begin{align}
    \lambda_0^z(t) \diff t=\diff \Lambda_0^z(t) = \mathbb{E}_{\mathbb{P}_0}[N^z(\diff t)] = \mathbb{P}_0(N^z(\diff t) = 1).
\end{align}

\textbf{Observational likelihood:} We write the full observed history up to time $t$ as  
\begin{align}
    \bar{H}_t = \{A_{T_j^a}, X_{T_j^x}, \bar{N}_t^a, \bar{N}_t^x: \: T_j^a \in \mathcal{T}_t^a, T_j^x \in \mathcal{T}_t^x \} .
\end{align}
Then, following \citep{Rytgaard.2022}, we can write the observational likelihood of the data $\bar{H}_\tau$ via
\begin{align}
    \diff \mathbb{P}_0(\bar{H}_\tau) = & \mu_{0,0}(X_0) \pint\limits_{s\in(0,\tau]} \Big( \left(\diff\Lambda_0^x(s\mid \bar{H}_{s-}) \mu_{0,s} (X_s\mid \bar{H}_{s-} )\right)^{N^x(\diff s)} \left(1-\diff\Lambda_0^x(s\mid \bar{H}_{s-})\right)^{1-N^x(\diff s)} \nonumber \\
    & \times\left(\diff\Lambda_0^a(s\mid \bar{H}_{s-}) \pi_{0,s} (A_s\mid \bar{H}_{s-} )\right)^{N^a(\diff s)} \left(1-\diff\Lambda_0^a(s\mid \bar{H}_{s-})\right)^{1-N^a(\diff s)} \Big),\label{eq:joint_likelihood}
\end{align}
where $\pint$ is the \emph{geometric product integral}. Intuitively, the geometric product integral $\pint$ can be thought of as the infinitesimal limit of the discrete product operator $\Pi$. Importantly, the geometric product integral in \Eqref{eq:joint_likelihood} is the natural way to describe joint likelihoods in \emph{continuous time}. For more details, we provide a brief overview of product integration in Supp.~\ref{appendix:product_integration}.

\textbf{Objective:} We are interested in estimating the response of the outcome variable $Y_\tau$ when intervening on the treatment sequence starting at time $t$, given an observed history $\bar{H}_{t-}=\bar{h}_{t-}$. For this, we adopt the potential outcomes framework \citep{Neyman.1923, Rubin.1978}. That is, we seek to estimate the \emph{\rebuttal{CAPO}}
\begin{equation}\label{eq:potential_outcome}
   \mathbb{E}\Big[ Y_\tau [\ubar{a}_{*,t}, \ubar{n}_{*,t}^a] \,\mid\,  \bar{H}_{t-}=\bar{h}_{t-}\Big],
\end{equation}
under interventions on both the treatment propensity $\ubar{A}_{t}=\ubar{a}_{*, t}$ and the treatment frequency ${\ubar{N}_{t}^a=\ubar{n}_{*,t}^a}$, given the history $\bar{H}_{t-}=\bar{h}_{t-}$. Here, the interventions are \emph{hard interventions}, which is standard in the literature \citep[e.g.,][]{Bica.2020c, Lim.2018, Melnychuk.2022, Seedat.2022}. That is, for $s\geq t$, we are interested in deterministic interventions of the form
\begin{equation}
\label{eq:hard_interventions}
\hspace{-0.3cm}
    A_s\sim\pi_{*,s}(A_s  \mid \bar{H}_{t-}=\bar{h}_{t-}) = \mathbbm{1}_{\{A_s=a_{*,s}\}}, 
    \quad
    N_s^a\sim\diff \Lambda_*^a (s \mid \bar{H}_{t-}=\bar{h}_{t-}) = \mathbbm{1}_{\{t_{*,j}^a\}_{j=1}^J}(s) \diff s,
\end{equation}
where $a_{*,s}:\mathbb{R}^+ \to \mathcal{A}$ is a step-wise constant function with jumping points $\{t_{*,j}^a\}_{j=1}^J$. 

Estimating the \rebuttal{CAPO} for a treatment sequence is notoriously challenging due to the fundamental problem of causal inference \citep{Imbens.2015}. This means, only {factual} outcomes $Y_\tau$ in are observed in the data, but not the potential outcomes when intervening on the treatment. In the following, we first define the interventional distribution for our objective and then ensure identifiability.

\textbf{Interventional distribution:} We now define the interventional distribution for our objective. For this, we rewrite \Eqref{eq:potential_outcome} by reweighting $Y_\tau$ under the observational distribution $\diff \mathbb{P}_0$. For this, we follow \cite{Rytgaard.2023} and first split the likelihood into two separate parts as 
\begin{align}
    \diff \mathbb{P}_0(\bar{H}_\tau) =  \diff \mathbb{P}_{Q_0,G_0}(\bar{H}_\tau)=\mu_{0,0}(X_0) \pint\limits_{s\in(0,\tau]} \diff G_{0,s}(\bar{H}_s) \diff Q_{0,s}(\bar{H}_s),
\end{align}
where 
\begin{align}
    \diff G_{0,s}(\bar{H}_s)= \left(\diff\Lambda_0^a(s\mid \bar{H}_{s-}) \, \pi_{0,s} (A_s\mid \bar{H}_{s-} )\right)^{N^a(\diff s)} \left(1-\diff\Lambda_0^a(s\mid \bar{H}_{s-})\right)^{1-N^a(\diff s)} 
\end{align}
is the treatment part the we intervene on and where
\begin{align}
    \diff Q_{0,s}(\bar{H}_s)=\left(\diff\Lambda_0^x(s\mid \bar{H}_{s-}) \mu_{0,s} (X_s\mid \bar{H}_{s-} )\right)^{N^x(\diff s)} \left(1-\diff\Lambda_0^x(s\mid \bar{H}_{s-})\right)^{1-N^x(\diff s)}
\end{align}
remains unchanged. Then, we can write the interventional distribution as
\begin{align}
    \diff \mathbb{P}_*(\bar{H}_\tau) = \diff \mathbb{P}_{Q_0,G_*}(\bar{H}_\tau)
    =\mu_{0,0}(X_0) \pint\limits_{s\in(0,\tau]} \diff G_{*,s}(\bar{H}_s) \diff Q_{0,s}(\bar{H}_s) ,
\end{align}
where
\begin{align}
    \diff G_{*,s}(\bar{H}_s)= \left(\diff\Lambda_*^a(s\mid \bar{H}_{s-}) \, \pi_{*,s} (A_s\mid \bar{H}_{s-} )\right)^{N^a(\diff s)} \left(1-\diff\Lambda_*^a(s\mid \bar{H}_{s-})\right)^{1-N^a(\diff s)}.
\end{align}

\textbf{Identifiability:} To ensure identifiability, we need to make the following assumptions \citep{Lok.2008, Robins.2009, Rytgaard.2022} that are standard in the literature for estimating \rebuttal{CAPOs} over time \citep[e.g.,][]{ Seedat.2022}. \emph{(i)~Consistency:} Given an intervention on the treatment propensity and the frequency $[\ubar{a}_{*, t}, \ubar{n}_{*, t}^a]$, the observed outcome $Y_\tau$ coincides with the potential outcome $Y_\tau[\ubar{a}_{*, t}, \ubar{n}_{*, t}^a]$ under this intervention. \emph{(ii)~Positivity:} Given any history $\bar{H}_{t-}$, the Radon-Nikod{\`y}m derivative ${\diff G_{*,t}}/{\diff G_{0,t}}$ 
exists. \emph{(iii)~Unconfoundedness:} Given any history $\bar{H}_{t-}$, the potential outcome is independent of the treatment assignment probability, that is, $ Y_\tau [\ubar{a}_{*, t}, \ubar{n}_{*, t}^a] \perp (\ubar{A}_{t}, \ubar{N}_{t}^a) \mid \bar{H}_{t-}$.

\begin{proposition}\label{prop:1}
    Under assumptions (i)--(iii), we can estimate the \rebuttal{CAPO} from observational data (i.e., from data sampled under $\diff \mathbb{P}_0$) via \textbf{inverse propensity weighting}, that is,
    \begin{align}
        \mathbb{E}\Big[ Y_\tau [\ubar{a}_{*, t}, \ubar{n}_{*, t}^a] \;\Big|\;  \bar{H}_{t-}=\bar{h}_{t-}  \Big] = \mathbb{E}\Big[ Y_\tau \pint\limits_{s\geq t} {\color{NavyBlue} W_s } \;\Big|\;  (\ubar{A}_{t}, \ubar{N}_{t}^a)=(\ubar{a}_{*, t}, \ubar{n}_{*, }^a), \bar{H}_{t-}=\bar{h}_{t-} \Big] ,\label{eq:IPW}
    \end{align}
    where the \textcolor{NavyBlue}{inverse propensity weights} for $s\geq t$ are defined as
    \begin{align}
        {\color{NavyBlue} W_s } \equiv {\color{NavyBlue}w_s}(\bar{H}_s) = \frac{\diff G_{*,s}(\bar{H}_{s})}{\diff G_{0,s}(\bar{H}_{s})}.\label{eq:weighted_expectation}
    \end{align}
\end{proposition}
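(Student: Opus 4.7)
The plan is to prove Proposition~\ref{prop:1} in three conceptual steps. First, I would convert the potential-outcome expectation into an expectation under the interventional measure by invoking consistency and unconfoundedness. Second, I would identify $\pint_{s\geq t} W_s$ as the Radon--Nikod\'ym derivative $\diff\mathbb{P}_*/\diff\mathbb{P}_0$ restricted to the post-$t$ history. Third, I would apply the abstract change-of-measure formula and exploit the degeneracy of the hard intervention to recover the conditional form in \eqref{eq:IPW}.

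For the first step, consistency (i) gives $Y_\tau = Y_\tau[\ubar{a}_{*,t}, \ubar{n}_{*,t}^a]$ on the event $\{(\ubar{A}_t, \ubar{N}_t^a) = (\ubar{a}_{*,t}, \ubar{n}_{*,t}^a)\}$, while unconfoundedness (iii) permits me to further condition on that event without altering the conditional distribution of the potential outcome given $\bar{H}_{t-}$. Iterating these two facts sequentially along the filtration generated by $N^a$ and $N^x$ yields the g-computation identity
\begin{align*}
    \mathbb{E}\left[Y_\tau[\ubar{a}_{*,t},\ubar{n}_{*,t}^a] \mid \bar{H}_{t-}\right] = \mathbb{E}_{\mathbb{P}_*}[Y_\tau \mid \bar{H}_{t-}].
\end{align*}
For the second step, I would use the $Q$--$G$ factorization displayed in the excerpt: since $\mu_{0,0}(X_0)$ and $\diff Q_{0,s}$ are shared by $\diff\mathbb{P}_0$ and $\diff\mathbb{P}_*$, they cancel in the ratio, leaving
\begin{align*}
    \frac{\diff\mathbb{P}_*}{\diff\mathbb{P}_0}\bigg|_{\bar{H}_{t-}}(\bar{H}_\tau) \;=\; \pint_{s\geq t} \frac{\diff G_{*,s}(\bar{H}_s)}{\diff G_{0,s}(\bar{H}_s)} \;=\; \pint_{s\geq t} W_s,
\end{align*}
which is well-defined by positivity (ii). Change of measure then gives $\mathbb{E}_{\mathbb{P}_*}[Y_\tau\mid\bar{H}_{t-}] = \mathbb{E}_{\mathbb{P}_0}\left[Y_\tau \pint_{s\geq t} W_s \mid \bar{H}_{t-}\right]$. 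Finally, because the hard interventions make $\pi_{*,s}$ and $\diff\Lambda_*^a$ supported only on $(\ubar{a}_{*,t}, \ubar{n}_{*,t}^a)$, the weights vanish off the matching event, so the expectation concentrates there, producing exactly the conditional form stated in \eqref{eq:IPW}.

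The main obstacle I anticipate is making the product integral $\pint W_s$ precise: the observational intensity $\lambda_0^a(s)\,\diff s$ is absolutely continuous in time, whereas $\diff\Lambda_*^a$ is a degenerate sum of Dirac masses at the deterministic intervention points $\{t_{*,j}^a\}_{j=1}^J$, so the derivative $\diff G_{*,s}/\diff G_{0,s}$ does not exist pointwise in the naive sense. The rigorous interpretation of this ratio, and of its product-integral accumulation into a likelihood ratio on path space, requires the marked-point-process machinery of \citet{Rytgaard.2022, Rytgaard.2023}; I would invoke that framework rather than redevelop the continuous-time Girsanov-type result from scratch.
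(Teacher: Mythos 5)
Your overall route is the one the paper takes: both arguments rest on the $Q$--$G$ factorization of the likelihood, use consistency and unconfoundedness for identification, and identify $\pint_{s\geq t} W_s$ as the post-$t$ likelihood ratio of the treatment part, whose existence is exactly what positivity (ii) supplies. The genuine problem is your final step. After the change of measure you have $\mathbb{E}_{\mathbb{P}_0}\big[Y_\tau \pint_{s\geq t} W_s \mid \bar{H}_{t-}=\bar{h}_{t-}\big]$, conditioned only on the history, and you pass to the right-hand side of \eqref{eq:IPW}, which conditions \emph{additionally} on $(\ubar{A}_{t}, \ubar{N}_{t}^a)=(\ubar{a}_{*, t}, \ubar{n}_{*, t}^a)$, by arguing that the weights ``vanish off the matching event, so the expectation concentrates there.'' Restricting the integrand to the support of the weights is not the same as conditioning on that event: in a discrete analogue, $\mathbb{E}_0[Z\,\mathbbm{1}_M\mid \bar h]=\mathbb{E}_0[Z\mid M,\bar h]\,\mathbb{P}_0(M\mid \bar h)$, so the two sides differ by the conditional probability of matching; and here the matching event is $\mathbb{P}_0$-null, since the interventional jump times $\{t_{*,j}^a\}_{j=1}^J$ are deterministic while $N^a$ has an absolutely continuous intensity $\lambda_0^a$ under $\mathbb{P}_0$. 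Your appeal to \citet{Rytgaard.2022, Rytgaard.2023} covers the existence and interpretation of the ratio $\diff G_{*,s}/\diff G_{0,s}$, but it does not license converting an unconditional weighted expectation into a conditional one; that conversion is precisely where the Dirac-versus-Lebesgue degeneracy bites, and as written it is a gap.

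The paper avoids this by ordering the steps differently: it first uses unconfoundedness (iii) to insert the conditioning on $(\ubar{A}_{t}, \ubar{N}_{t}^a)=(\ubar{a}_{*, t}, \ubar{n}_{*, t}^a)$ into the factors $\diff Q_{0,s}$ and $\diff G_{*,s}$ of the interventional likelihood, then applies consistency (i) to replace $y_\tau[\ubar{a}_{*,t},\ubar{n}_{*,t}^a]$ by $y_\tau$, and only then multiplies and divides by $\diff G_{0,s}$ (positivity (ii)), using multiplicativity of the product integral; the conditional expectation on the right of \eqref{eq:IPW} is then read off directly from the conditional likelihood $\pint \diff Q_{0,s}\,\diff G_{0,s}$, and no concentration argument is needed. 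If you reorder your plan accordingly --- condition on the treatment sequence \emph{before} changing measure rather than after --- it coincides with the paper's proof; in its current form the last passage does not follow.
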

\begin{proof}
    See Supp.~\ref{appendix:proofs}.
\end{proof}
Proposition~\ref{prop:1} is important for the rest of our paper: it tells us that we can estimate \rebuttal{CAPOs} in continuous time from data sampled under $\diff \mathbb{P}_{0}$. For this, we need to quantify the change in measure from the observational distribution $\diff \mathbb{P}_{0}$ to the {interventional} distribution $\diff \mathbb{P}_{*}$, which is given by \Eqref{eq:weighted_expectation}. 

However, the above formulation is based on a product integral $\pint$, which is \emph{not} computationally tractable. $\Rightarrow$~We later derive a novel, \emph{tractable} formulation where we rewrite \Eqref{eq:IPW} using the product operator $\prod$ (Sec.~\ref{sec:tractable_objective}). Further, inverse propensity weights $\color{NavyBlue} W_s$ may lead to extreme weights and, hence, unstable performance. This is a known issue because settings over time are prone to low overlap \citep{Frauen.2025,Lim.2018}. $\Rightarrow$~We later derive novel \emph{stabilized weights} that are tailored to our continuous time setting (Sec.~\ref{sec:stabilized_weights}).

\section{\method}

In this section, we introduce our \method. It is designed to perform proper adjustments for time-varying confounding in continuous time.

\textbf{Objective:} Our objective is to find the optimal parameters\footnote{Throughout, we refer to the weights of neural nets as \emph{parameters} to make the distinction to inverse propensity weights clear.} $\hat{\phi}$ of a neural network $m_\phi$ via
\small
\begin{align}
    \hat{\phi} =& \argmin_\phi
    \mathbb{E}_{\mathbb{P}_*} \left[ \Big(Y_\tau[\ubar{a}_{*, t}, \ubar{n}_{*, t}^a] - m_{\phi}(\ubar{A}_{t}, \ubar{N}_{t}^a,\bar{H}_{t-})\Big)^2  \;\middle|\; \bar{H}_{t-}=\bar{h}_{t-} \right]\\
    =&
    \argmin_\phi \mathbb{E}_{\mathbb{P}_0} \Big[ \Big(Y_\tau - m_{\phi}(\ubar{A}_{t}, \ubar{N}_{t}^a,\bar{H}_{t-})\Big)^2 \pint\limits_{s\geq t} {\color{NavyBlue} W_s } \;\Bigg|\; (\ubar{A}_{ t}, \ubar{N}_{t}^a)=(\ubar{a}_{*, t}, \ubar{n}_{*, t}^a), \bar{H}_{t-}=\bar{h}_{t-} \Big] . \label{eq:unstable_opt}
\end{align}
\normalsize
Note that the above objective makes use of inverse propensity weights \textcolor{NavyBlue}{$W_s$}. However, these weights suffer from two drawbacks: they are (1)~intractable as they rely on product integrals $\pint$, and (2)~they may lead to unstable performance. As a remedy, we (1)~derive a tractable expression for this product integral (Sec.~\ref{sec:tractable_objective}), and we further (2)~introduce \emph{stabilized weights} (Sec.~\ref{sec:stabilized_weights}). Finally, we present our neural architecture (Sec.~\ref{sec:neural_architecture}) and how to perform inference (Sec.~\ref{sec:inferences}). 

\subsection{Rewriting the objective for computational tractability}
\label{sec:tractable_objective}

We now derive a tractable expression to compute our \emph{unstabilized weights} ${\color{NavyBlue}W_s}$ in \Eqref{eq:weighted_expectation}. 

\begin{proposition}\label{prop:unstable_weights}
Let $t_{*,0}^a=t$ for notational convenience. The unstabilized weights in \Eqref{eq:weighted_expectation} satisfy
{\small
\begin{align}
    &\pint\limits_{s\geq t} {\color{NavyBlue} W_s } \;\Big|\; \Big( (\ubar{A}_{ t}, \ubar{N}_{ t}^a)=(\ubar{a}_{*, t}, \ubar{n}_{*, t}^a), \bar{H}_{t-}=\bar{h}_{t-} \Big) 
     =& \prod_{j=1}^J {\color{NavyBlue} W_{t_{*,j}^a} } \;\Big|\; \Big( ( \ubar{A}_{ t}, \ubar{N}_{ t}^a)=(\ubar{a}_{*, t}, \ubar{n}_{*, t}^a), \bar{H}_{t-}=\bar{h}_{t-} \Big) ,
\end{align}
}
where
\begin{align}
    {\color{NavyBlue} W_{t_{*,j}^a} } = \frac{\exp  \int_{s \in [t_{*,j-1}^a,t_{*,j}^a)} \lambda_0^a(s\mid \bar{H}_{s-}) \diff s  }
     {\lambda_0^a(t_{*,j}^a\mid \bar{H}_{t_{*,j}^a-}) \, \pi_{0,t_{*,j}^a} (a_{*,t_{*,j}^a}\mid \bar{H}_{t_{*,j}^a-} )}.\label{eq:unstable_weights}
\end{align}
\end{proposition}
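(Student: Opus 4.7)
The plan is to split the geometric product integral at the deterministic intervention times $t_{*,j}^a$ and apply standard product integration identities piece by piece. Specifically, I would partition $[t,\tau]$ into the alternating ``no-jump'' half-open subintervals $[t_{*,j-1}^a, t_{*,j}^a)$ and the singleton jump points $\{t_{*,j}^a\}$ for $j=1,\ldots,J$, exploiting the fact that $\pint$ factorises multiplicatively over any disjoint partition of the time axis. Since $[t_{*,J}^a,\tau]$ contains no treatment event under either measure, its contribution to the product integral is $1$, so only the $J$ blocks matter. This reduces the problem to identifying the per-block contribution with $W_{t_{*,j}^a}$ in \Eqref{eq:unstable_weights}.

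On each no-jump subinterval, the conditioning $\ubar{N}^a_t = \ubar{n}_{*,t}^a$ forces $N^a(\diff s)=0$, and the hard intervention \Eqref{eq:hard_interventions} also gives $\diff\Lambda_*^a(s\mid \bar H_{s-})=0$. The defining expressions for $\diff G_{*,s}$ and $\diff G_{0,s}$ then collapse so that $W_s = (1-\diff\Lambda_*^a)/(1-\diff\Lambda_0^a)=1/(1-\diff\Lambda_0^a(s\mid \bar H_{s-}))$. Applying the standard product integration identity (see Supp.~\ref{appendix:product_integration}),
\[
\pint\limits_{s\in[t_{*,j-1}^a,t_{*,j}^a)}\!\bigl(1-\diff\Lambda_0^a(s\mid \bar H_{s-})\bigr) \;=\; \exp\!\Bigl(-\int_{t_{*,j-1}^a}^{t_{*,j}^a}\lambda_0^a(s\mid \bar H_{s-})\,\diff s\Bigr),
\]
and inverting gives precisely the numerator $\exp\int_{[t_{*,j-1}^a,t_{*,j}^a)} \lambda_0^a(s\mid \bar H_{s-})\,\diff s$ of \Eqref{eq:unstable_weights}.

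At each jump time $t_{*,j}^a$ we have $N^a(\diff s)=1$, so only the first factor of each $\diff G$ contributes and the instantaneous ratio is $\bigl(\diff\Lambda_*^a\,\pi_{*,s}\bigr)/\bigl(\diff\Lambda_0^a\,\pi_{0,s}\bigr)$. Under the hard intervention, $\pi_{*,t_{*,j}^a}(a_{*,t_{*,j}^a}\mid \bar H_{t_{*,j}^a-})=1$ and the interventional counting measure $\diff\Lambda_*^a$ places a unit atomic mass at $t_{*,j}^a$, so the ratio collapses to $1/\bigl(\lambda_0^a(t_{*,j}^a\mid \bar H_{t_{*,j}^a-})\,\pi_{0,t_{*,j}^a}(a_{*,t_{*,j}^a}\mid \bar H_{t_{*,j}^a-})\bigr)$, with the $\diff s$ in $\diff\Lambda_0^a=\lambda_0^a\,\diff s$ absorbed by the atomic mass. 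Multiplying this atomic contribution by the preceding survival factor yields exactly $W_{t_{*,j}^a}$, and multiplying across $j=1,\ldots,J$ turns the product integral into the finite product claimed.

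The main obstacle I expect is giving a clean justification of the cancellation at the jump points, i.e.\ arguing that the ratio of the interventional atomic intensity to the observational absolutely continuous intensity yields the finite factor $1/\lambda_0^a$. I would handle this either by viewing $\diff G_{*,s}/\diff G_{0,s}$ as the Radon--Nikod\'ym density between the interventional and observational path measures on $[t,\tau]$ (so that the atomic contributions at $t_{*,j}^a$ become explicit and absolute continuity holds away from them), or via the discretised-limit construction of the geometric product integral in \citet{Rytgaard.2022}, which expresses $\pint$ as the limit of Riemann-type partition products and shows that the cell containing $t_{*,j}^a$ converges to the stated finite quantity. All remaining manipulations are mechanical applications of the product integration framework from Supp.~\ref{appendix:product_integration}.
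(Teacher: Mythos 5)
Your proposal is, in substance, the same argument as the paper's: both rest on the product-integral calculus of Supp.~\ref{appendix:product_integration} (the exp--log identity and multiplicativity), both reduce the jump part to a finite product over the intervention times $\{t_{*,j}^a\}_{j=1}^J$ after conditioning on $\ubar{N}_{t}^a=\ubar{n}_{*,t}^a$, and both evaluate the no-jump part as $\exp\big(\int \lambda_0^a(s\mid \bar H_{s-})\,\diff s\big)$ because the interventional intensity integrates to zero. The only organizational difference is that you partition the time axis first and evaluate block by block, whereas the paper splits the integrand $W_s$ into its $N^a(\diff s)$- and $(1-N^a(\diff s))$-factors, product-integrates each over all of $[t,\tau]$, and recombines at the end; the two orderings are interchangeable and buy nothing essentially different.

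Two details deserve attention. First, at the jump times the paper does not need your ``unit atomic mass absorbed by $\diff s$'' device: by \Eqref{eq:hard_interventions} the interventional cumulative intensity is $\diff\Lambda_*^a(s)=\mathbbm{1}_{\{t_{*,j}^a\}_{j=1}^J}(s)\,\diff s$, so within the paper's formal $\lambda\,\diff s$ notation the $\diff s$ cancels between numerator and denominator and $\lambda_*^a(t_{*,j}^a\mid\cdot)\,\pi_{*,t_{*,j}^a}(a_{*,t_{*,j}^a}\mid\cdot)=1$ literally; your fallback via the discretized-limit construction would also work, but the atom picture is not the paper's convention and, taken literally, makes the pointwise ratio ill-defined rather than finite. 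Second, your claim that $[t_{*,J}^a,\tau]$ contributes a factor $1$ ``because it contains no treatment event under either measure'' is wrong as a justification: a no-event stretch contributes the ratio of survival factors, i.e.\ $\exp\big(\int_{[t_{*,J}^a,\tau]}\lambda_0^a(s\mid\bar H_{s-})\,\diff s\big)$, not $1$. The paper's own derivation instead carries $\exp\big(\int_{s\geq t}\lambda_0^a\,\diff s\big)$ over the whole horizon and then rewrites it as the product of the interval factors over $[t_{*,j-1}^a,t_{*,j}^a)$, so the stated formula is recovered under the convention that these intervals exhaust $[t,\tau]$ (effectively $t_{*,J}^a=\tau$); you should either adopt that convention explicitly or keep the trailing survival factor, but not argue that it equals one.
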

\begin{proof}
    See Supp.~\ref{appendix:proofs}.
\end{proof}
Proposition~\ref{prop:unstable_weights} has important implications for tractability: Given a history ${\bar{H}_{t-}=\bar{h}_{t-}}$ and the future sequence of treatments ${(\ubar{A}_{ t}, \ubar{N}_{ t}^a)=(\ubar{a}_{*, t}, \ubar{n}_{*, t}^a)}$, the product integral $\pint$ of the unstabilized inverse propensity weights ${\color{NavyBlue}W_s}$ reduces to a finite product $\prod$. This product includes both treatment propensities and treatment intensities. In Sec.~\ref{sec:neural_architecture}, we show how to learn these quantities from data.

Importantly, the unstabilized weights in \Eqref{eq:unstable_weights} are already sufficient to adjust for time-varying confounding. However, as they may lead to extreme weights, we now propose \emph{stabilized weights}.

\subsection{Stabilized weights}
\label{sec:stabilized_weights}

In the following, we first define our \emph{stabilized weights} (Def.~\ref{def:stabilized_weights}). Then, we show that the optimal parameters $\hat{\phi}$ in \Eqref{eq:unstable_opt} are the same, regardless of whether the original, unstabilized inverse propensity weights from above are used or our stabilized weights (Proposition~\ref{prop:prop1}). Finally, we present a tractable expression to compute the stabilized weights (Proposition~\ref{prop:scaling_factor}). 

\begin{definition}\label{def:stabilized_weights}
For $s\geq t$, let the scaling factor ${\color{BrickRed} \Xi_s }$ be given by the ratio of the marginal transition probabilities of treatment, that is,
    \begin{align}
     {\color{BrickRed} \Xi_s } \equiv {\color{BrickRed} \xi_s }(\bar{A}_s, \bar{N}_s^a) = \frac{\diff G_{0,s}(\bar{A}_s, \bar{N}_s^a)}{\diff G_{*,s}(\bar{A}_s, \bar{N}_s^a)}.\label{eq:scaling_factor}
    \end{align}
    We define the {\color{BrickRed}\textbf{stabilized weights} $\widetilde{W}_s$} as 
    \begin{align}\label{eq:stable_weights}
        {\color{BrickRed} \widetilde{W}_s } = {\color{BrickRed} \Xi_s } {\color{NavyBlue} W_s } .
    \end{align}
\end{definition}
The idea of stabilized weights \citep[e.g.,][]{Lim.2018} is that the marginal transition probabilities ${\color{BrickRed} \Xi_s }$, on average over the population, \emph{downscale} the inverse propensity weights. Importantly, the scaling factors are not conditioned on the individual history and, therefore, do not change the objective. 

To formalize this, we make use of the fact that the optimal parameters in \Eqref{eq:unstable_opt} are \emph{invariant} to multiplicative scaling of the optimization problem with respect to constant scaling factors. We summarize this in the following proposition.
\begin{proposition}\label{prop:prop1}
    The optimal parameters $\hat{\phi}$ in \Eqref{eq:unstable_opt} can equivalently be obtained by
    \small
    \begin{align}
        \hat{\phi}=\argmin_\phi \mathbb{E}_{\mathbb{P}_0} \Bigg[ \Big(Y_\tau - m_{\phi}(\ubar{A}_{t}, \ubar{N}_{t}^a,\bar{H}_{t-})\Big)^2 \pint\limits_{s\geq t} {\color{BrickRed} \widetilde{W}_s } \;\Bigg|\;  (\ubar{A}_{ t}, \ubar{N}_{ t}^a)=(\ubar{a}_{*, t}, \ubar{n}_{*, t}^a), \bar{H}_{t-}=\bar{h}_{t-} \Bigg] . \label{eq:stable_opt}
    \end{align}
    \normalsize
\end{proposition}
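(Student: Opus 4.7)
The plan is to reduce Equation~\ref{eq:stable_opt} to Equation~\ref{eq:unstable_opt} by showing that the stabilized product-integral weight differs from the unstabilized one only by a positive multiplicative constant that is deterministic under the conditioning event, and therefore cannot affect the $\argmin$.

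First, I would apply the same telescoping calculation as in Proposition~\ref{prop:unstable_weights} to reduce both product integrals to finite products over the intervention jump times $\{t_{*,j}^a\}_{j=1}^J$. Combined with ${\color{BrickRed}\widetilde{W}_s} = {\color{BrickRed}\Xi_s}\,{\color{NavyBlue}W_s}$ from Equation~\ref{eq:stable_weights}, this yields the elementary factorization
\begin{align*}
    \pint\limits_{s\geq t} {\color{BrickRed}\widetilde{W}_s} \;=\; \prod_{j=1}^{J} {\color{BrickRed}\widetilde{W}_{t_{*,j}^a}} \;=\; \Bigl(\prod_{j=1}^{J} {\color{BrickRed}\Xi_{t_{*,j}^a}}\Bigr)\Bigl(\prod_{j=1}^{J} {\color{NavyBlue}W_{t_{*,j}^a}}\Bigr).
\end{align*}

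Second, from Definition~\ref{def:stabilized_weights}, ${\color{BrickRed}\Xi_s} = \xi_s(\bar{A}_s, \bar{N}_s^a)$ depends only on the treatment trajectory. Under the conditioning event $(\ubar{A}_t, \ubar{N}_t^a) = (\ubar{a}_{*,t}, \ubar{n}_{*,t}^a)$ together with $\bar{H}_{t-} = \bar{h}_{t-}$, the full treatment path on $[0,\tau]$ is deterministic, so $\prod_j {\color{BrickRed}\Xi_{t_{*,j}^a}}$ collapses to a strictly positive constant $C \equiv C(\bar{h}_{t-}, \ubar{a}_{*,t}, \ubar{n}_{*,t}^a)$ that is measurable with respect to the conditioning $\sigma$-algebra and therefore pulls out of the conditional expectation. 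Since $C>0$ and is independent of $\phi$, the loss in Equation~\ref{eq:stable_opt} equals $C$ times the loss in Equation~\ref{eq:unstable_opt}, and the two $\argmin_\phi$ problems share the same solution $\hat{\phi}$, which is exactly the claim.

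The main obstacle I anticipate is verifying that the telescoping calculation of Proposition~\ref{prop:unstable_weights} goes through for ${\color{BrickRed}\Xi_s}$. This should be routine because $\xi_s$ has the same structural form as ${\color{NavyBlue}W_s}$ — a ratio of $\diff G_{\cdot,s}$ factors — so the same cancellation of the $(1 - \diff\Lambda^a)$ terms at non-jumping times applies, with marginal intensities and propensities replacing their conditional counterparts. A secondary care-point is confirming that no covariate-dependent randomness is smuggled into ${\color{BrickRed}\Xi_s}$ once the treatment path is fixed; this is immediate from the functional form $\xi_s(\bar{A}_s, \bar{N}_s^a)$ given in Definition~\ref{def:stabilized_weights}.
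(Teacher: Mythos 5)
Your proposal is correct and rests on the same essential argument as the paper's proof: under the conditioning event the treatment path is fixed, so ${\color{BrickRed}\Xi_s}=\xi_s(\bar{A}_s,\bar{N}_s^a)$ is a deterministic, strictly positive quantity, its product over $s\geq t$ pulls out of the conditional expectation as a constant independent of $\phi$, and a positive multiplicative rescaling of the objective leaves the $\argmin$ unchanged. The only difference is that you first reduce both product integrals to finite products via the telescoping of Proposition~\ref{prop:unstable_weights}; the paper instead factors out $\pint_{s\geq t} {\color{BrickRed}\Xi_s}$ directly at the product-integral level (deferring the finite-product form to Proposition~\ref{prop:scaling_factor}), so your extra reduction step is harmless but not needed.
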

\begin{proof}
    See Supp.~\ref{appendix:proofs}. 
\end{proof}
Proposition~\ref{prop:prop1} guarantees that we can substitute the original, unstabilized weights {\color{NavyBlue}$W_s$} with the stabilized version {\color{BrickRed}$\widetilde{W}_s$}. As the scaling factors {\color{BrickRed}$\Xi_s$} downscale {\color{NavyBlue}$W_s$}, we reduce the risk of receiving extreme inverse propensity weights, and, thus, obtain a more stable objective.

The results from Proposition~\ref{prop:prop1} still rely on product integrals $\pint$. However, as in Proposition~\ref{prop:unstable_weights}, we now derive an equivalent, tractable expression that relies on the product operator $\Pi$ instead.

\begin{proposition}\label{prop:scaling_factor}
Let $t_{*,0}^a=t$ for notational convenience. The scaling factor ${\color{BrickRed} \Xi_s }$ from \Eqref{eq:scaling_factor} then satisfies
    \begin{align}
        & \pint\limits_{s\geq t} {\color{BrickRed} \Xi_s } \;\Big|\; \Big( (\ubar{A}_{ t}, \ubar{N}_{ t}^a)=(\ubar{a}_{*, t}, \ubar{n}_{*, t}^a),  (\bar{A}_{t-}, \bar{N}_{t-}^a)=(\bar{a}_{t-}, \bar{n}_{t-}^a) \Big) \\
        = & \prod_{j=1}^J {\color{BrickRed}  \Xi_{t_{*,j}^a} } \;\Big|\; \Big( (\ubar{A}_{ t}, \ubar{N}_{ t}^a)=(\ubar{a}_{*, t}, \ubar{n}_{*, t}^a),  (\bar{A}_{t-}, \bar{N}_{t-}^a)=(\bar{a}_{t-}, \bar{n}_{t-}^a) \Big) ,
    \end{align}
    where
    \begin{align}
        {\color{BrickRed}  \Xi_{t_{*,j}^a} }
        = \frac{\lambda_0^a(t_{*,j}^a\mid \bar{A}_{t_{*,j}^a-}, \bar{N}_{t_{*,j}^a-}^a) \,
        \pi_{0,t_{*,j}^a} (a_{t_{*,j}^a}\mid \bar{A}_{t_{*,j}^a-}, \bar{N}_{t_{*,j}^a-}^a)}
        {\exp  \int_{s \in [t_{*,j-1}^a,t_{*,j}^a)} \lambda_0^a(s\mid \bar{A}_{s-},\bar{N}_{s-}^a)\diff s}.\label{eq:scaling_factor_reduced}
    \end{align}
\end{proposition}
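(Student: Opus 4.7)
The statement is the analogue of Proposition~\ref{prop:unstable_weights} for the scaling factor $\Xi_s$ rather than the inverse propensity weight $W_s$. Two differences change the derivation only superficially: (i) the ratio is inverted, $\Xi_s = \diff G_{0,s}/\diff G_{*,s}$ instead of $W_s = \diff G_{*,s}/\diff G_{0,s}$, and (ii) the conditioning set is restricted from the full history $\bar{H}_{s-}$ to the marginal treatment history $(\bar{A}_{s-},\bar{N}_{s-}^a)$. My plan is therefore to mirror the proof of Proposition~\ref{prop:unstable_weights}, adapted to these two changes.

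First, I would substitute the explicit expressions for $\diff G_{0,s}$ and $\diff G_{*,s}$ into $\pint_{s\geq t}\Xi_s$, using the marginal treatment-history conditioning. Then, I would exploit the structure of the hard intervention from \Eqref{eq:hard_interventions}: under $(\ubar{A}_t,\ubar{N}_t^a)=(\ubar{a}_{*,t},\ubar{n}_{*,t}^a)$, the interventional propensity $\pi_{*,s}(a_{*,s}\mid\cdot)$ equals $1$ at each $t_{*,j}^a$, and the interventional intensity $\diff\Lambda_*^a$ is concentrated as unit point masses on $\{t_{*,j}^a\}_{j=1}^J$ while vanishing elsewhere. Consequently, under this intervention, $N^a(\diff s)=1$ exactly at the points $\{t_{*,j}^a\}_{j=1}^J$ and $N^a(\diff s)=0$ elsewhere on $[t,\tau]$.

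Next, I would split the product integral into contributions at the intervention points (where $N^a(\diff s)=1$) and contributions away from them (where $N^a(\diff s)=0$). On the latter set, $\Xi_s$ reduces to $(1-\diff\Lambda_0^a)/(1-\diff\Lambda_*^a)=1-\diff\Lambda_0^a$, whose geometric product integral over an interval $[t_{*,j-1}^a,t_{*,j}^a)$ is, by the standard rules of product integration recalled in Supp.~\ref{appendix:product_integration}, the Kolmogorov exponential $\exp\bigl(-\int_{[t_{*,j-1}^a,t_{*,j}^a)}\lambda_0^a(s\mid\bar{A}_{s-},\bar{N}_{s-}^a)\diff s\bigr)$. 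At each $t_{*,j}^a$, the point-mass contribution from $\diff G_{0,s}$ combines with the unit point mass from $\diff G_{*,s}$ to yield the hazard-like factor $\lambda_0^a(t_{*,j}^a\mid\bar{A}_{t_{*,j}^a-},\bar{N}_{t_{*,j}^a-}^a)\,\pi_{0,t_{*,j}^a}(a_{t_{*,j}^a}\mid\bar{A}_{t_{*,j}^a-},\bar{N}_{t_{*,j}^a-}^a)$ in the numerator of \Eqref{eq:scaling_factor_reduced}. Assembling the discrete factor and the adjacent continuous exponential over all successive intervals gives the finite product $\prod_{j=1}^{J}\Xi_{t_{*,j}^a}$ with $\Xi_{t_{*,j}^a}$ in the form claimed in \Eqref{eq:scaling_factor_reduced}.

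The main obstacle, as in Proposition~\ref{prop:unstable_weights}, is the careful bookkeeping of the product integral: specifically, combining the infinitesimal jump contribution at each $t_{*,j}^a$ with the adjacent continuous background so that the resulting factor is finite and matches \Eqref{eq:scaling_factor_reduced}. Since the structure of the derivation is identical to the unstabilized case, this reduces to invoking the same product-integration machinery already used there, with numerator and denominator exchanged and with conditioning restricted to the marginal treatment history.
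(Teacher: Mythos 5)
Your proposal is correct and takes essentially the same route as the paper, which likewise mirrors the proof of Proposition~2 with the ratio inverted and the conditioning restricted to the treatment history $(\bar{A}_{s-},\bar{N}_{s-}^a)$. The only cosmetic difference is that the paper reduces the jump part via the exp-log identity for product integrals and removes the interventional terms by noting that $\pi_{*}$ equals one and that the integral of the indicator over the finitely many intervention times vanishes, whereas you phrase the same facts as a split into point-mass contributions at $\{t_{*,j}^a\}$ and the Kolmogorov exponential of $\lambda_0^a$ on the complementary intervals.
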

\begin{proof}
    See Supp.~\ref{appendix:proofs}.
\end{proof}
Together, Propositions~\ref{prop:unstable_weights}, \ref{prop:prop1} and \ref{prop:scaling_factor} yield a (1)~tractable objective function that relies on (2)~stabilized inverse propensity weights. 
As we show in the following Sec.~\ref{sec:neural_architecture}, we can estimate the stabilized weights from data. Thereby, we present our \method, which adjusts for time-varying confounding in continuous time.

\subsection{Neural architecture}
\label{sec:neural_architecture}

\underline{\textbf{Overview:}} We now introduce the neural architecture of our \method, which consists of four components (see Fig.~\ref{fig:architecture}): The \circledgray{S} \textbf{stabilization network} learns an estimator of the scaling factors ${\color{BrickRed}\xi_{s}}(\cdot)$ from \Eqref{eq:scaling_factor}. The \circledgray{W} \textbf{weight network} learns an estimator of the unstabilized inverse propensity weights ${\color{NavyBlue}w_{s}}(\cdot)$ from \Eqref{eq:weighted_expectation}. Combining both, we have an estimator for the stabilized weights ${\color{BrickRed}\widetilde{w}_{s}}(\cdot)$ from \Eqref{eq:stable_weights}. The \circledgray{E} \textbf{encoder} learns a representation of the observed history, which is then passed to the decoder. Finally, the \circledgray{D} \textbf{decoder} takes the learned representations and the stabilized inverse propensity weights as input to estimate the \rebuttal{CAPOs} $\mathbb{E}\Big[ Y_\tau [\ubar{a}_{*,t}, \ubar{n}_{*,t}^a] \,\mid\,  \bar{H}_{t-}=\bar{h}_{t-}\Big]$.

\underline{\textbf{Backbones:}} All components \circledgray{S}, \circledgray{W}, \circledgray{E}, and \circledgray{D} use neural controlled differential equations (CDEs) \citep{Kidger.2020, Morrill.2021} as backbones. Neural CDEs have several benefits for our \method. First, neural CDEs process data in continuous time. Second, neural CDEs update their hidden states as data becomes available over time.
We provide a brief introduction in Supp.~\ref{appendix:neural_cde}.

\begin{figure}[t]
\vspace{-0.2cm}
    \centering
    \includegraphics[width=\textwidth, trim=0cm 20.0cm 0cm 1.25cm, clip]{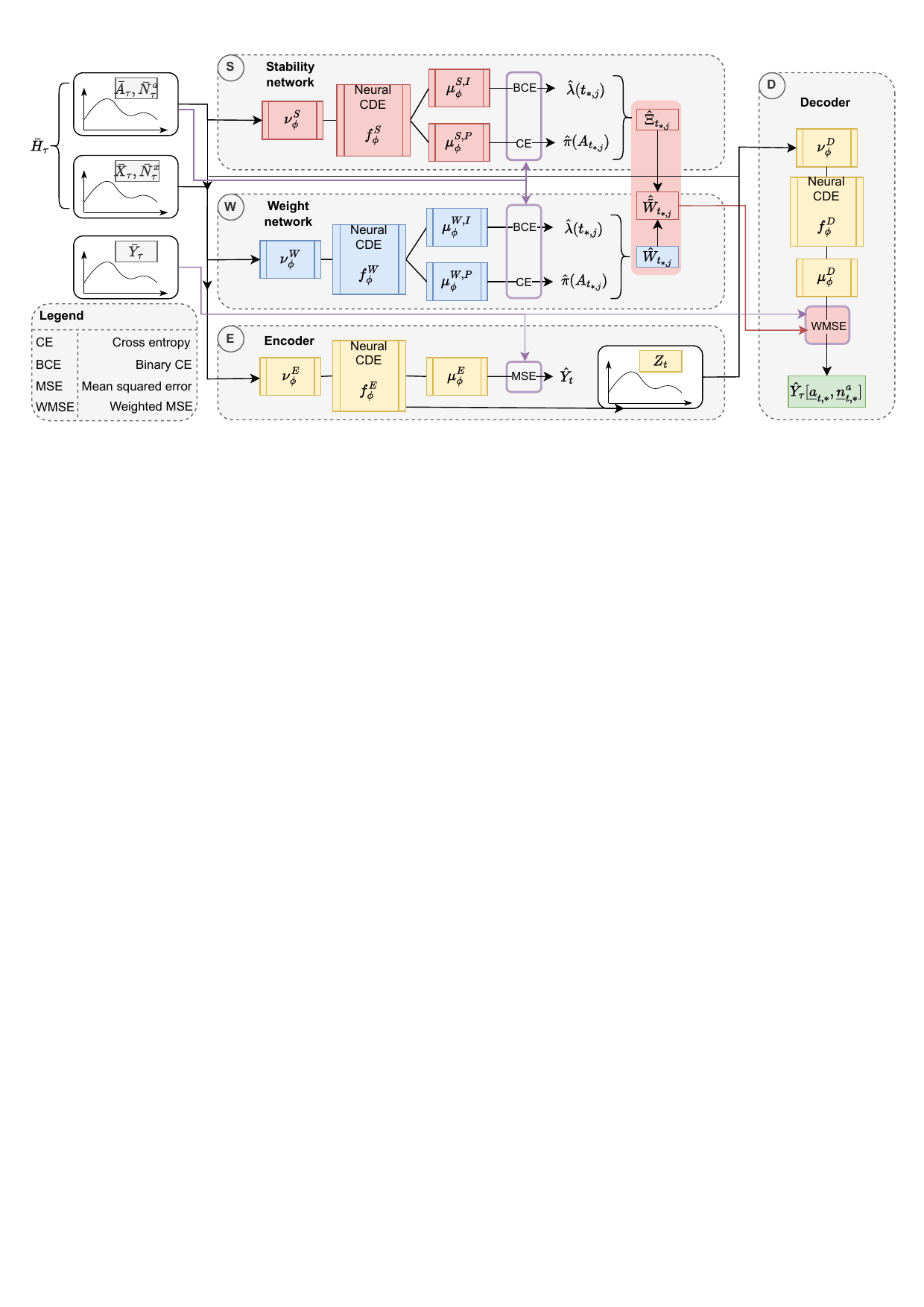} 
    \vspace{-0.2cm}
    \caption{\textbf{Neural architecture of our \method.}}
    \label{fig:architecture}
\end{figure}

In the following, we denote the \emph{training samples} by  $\{\bar{h}_{i,\tau}\}_{i=1}^n$ and the \emph{test samples} by $\{(\bar{h}_{k,t-}, \ubar{a}_{*,t},\ubar{n}_{*,t}^a)\}_{k=1}^m$. Reassuringly, we emphasize that $\bar{h}_{i,t-},\bar{h}_{k,t-}$ are realizations from the \emph{observational distribution} $\diff \mathbb{P}_0$, whereas $(\ubar{a}_{*,t},\ubar{n}_{*,t}^a)$ is the interventional treatment sequence.

\circledgray{S}~\textbf{Stability network:} The stability network learns an estimator ${\color{BrickRed}\hat{\xi}_s}(\cdot)$ for the scaling factors $\color{BrickRed}{\Xi_{t_{*,j}^a}}$ from Proposition~\ref{prop:scaling_factor}. It consists of a linear input layer $\nu_\phi^S$, a neural vector field $f_\phi^S$, and two linear output layers $\mu_\phi^{S,I}$ and $\mu_\phi^{S,P}$, which estimate the treatment intensity and propensity, respectively.

\underline{Training:} The stability network receives treatment decisions and treatment times $(\bar{A}_{\tau},\bar{N}_{\tau}^a)=(\bar{a}_{i,\tau},\bar{n}_{i,\tau}^a)$. We distinguish two cases. 
For $0 \leq t < \tau$, the latent representation $z_{i,t}^S$ evolves as
\begin{align}
    z_{i,t}^S = z_{i,0}^S+ \int_0^t f_\phi^S(z_{i,s}, s) \diff [{a}_{i,s-},{n}_{i,s-}^a],
\end{align}
where $\diff [{a}_{i,s-},{n}_{i,s-}^a]$ denotes Riemann-Stieltjes integration with respect to the control path and ${z_{i,0}^S=\nu_\phi^S({a}_{i,0},{n}_{i,0}^a)}$. \emph{Case~(1):} The latent representation is then passed to the intensity layer $\mu_\phi^{S,I}$ in order to estimate the probability whether a treatment decision is made at time $t$. For this, our \method optimizes the binary cross entropy (BCE) loss
\begin{align}
    \mathcal{L}_t^{S,I}(\phi)= \text{BCE}\Big(\mu_\phi^{S,I}(z_{i,t}^S), \diff {n}_{i,t}^a\Big).
\end{align}
Thereby, the intensity layer $\mu_\phi^{S,I}$ learns an estimator of the treatment intensity function via
\begin{align}
    \hat{\lambda}_0^a(t\mid \bar{A}_{t-},\bar{N}_{t-}^a)=\mu_\phi^{S,I}(Z_{t}^S).
\end{align}
\emph{Case~(2):} If a treatment decision is made at time $T_j^a=t_{i,j}^a$, the latent representation is additionally passed through the propensity output layer $\mu_\phi^{S,P}$ to estimate which treatment is administered. For this, our \method minimizes the cross entropy (CE) loss
\begin{align}
    \mathcal{L}_j^{S,P}(\phi)= \text{CE}\Big(\mu_\phi^{S,P}(z_{i,t_{i,j}^a}^S), {a}_{i,t_{i,j}^a}\Big),
\end{align}
Hence, the propensity layer learns an estimator of the propensity score via
\begin{align}
\hat{\pi}_0^a(a_t \mid \bar{A}_{t-},\bar{N}_{t-}^a)=\mu_\phi^{S,P}(Z_{t}^S).
\end{align}

\underline{Scaling factor:} After training, the stability network \emph{again} receives the training samples ${(\bar{A}_\tau, \bar{N}_\tau^a)=(\bar{a}_{i,\tau}, \bar{n}_{i,\tau}^a)}$. Following Proposition~\ref{prop:scaling_factor}, it then computes
{\footnotesize
\begin{align}
   &\pint\limits_{s\geq t} {\color{BrickRed}{\hat{\xi}_s}} (\bar{a}_{i,s}, \bar{n}_{i,s}^a) \\
   &= \prod_{j=1}^J \frac{\hat{\lambda}_0^a(t_{i,j}^a\mid \bar{A}_{t_{i,j}^a-}=\bar{a}_{i,t_{i,j}^a-}, \bar{N}_{t_{i,j}^a-}^a=\bar{n}_{i,t_{i,j}^a-}^a) \, 
        \hat{\pi}_{0,t_{i,j}^a} (a_{t_{i,j}^a}\mid \bar{A}_{t_{i,j}^a-}=\bar{a}_{i,t_{i,j}^a-}, \bar{N}_{t_{i,j}^a-}^a=\bar{n}_{i,t_{i,j}^a-}^a)}
        {\exp  \int_{s \in [t_{i,j-1}^a,t_{i,j}^a)} \hat{\lambda}_0^a(s\mid \bar{A}_{s-}=\bar{a}_{i,s-},\bar{N}_{s}^a=\bar{n}_{i,s-}^a)\diff s} ,\nonumber
\end{align}
}
where we can use an arbitrary quadrature scheme to compute the integral in the denominator.

\circledgray{W}~\textbf{Weight network:} The weight network learns an estimator ${\color{NavyBlue}\hat{w}_s}(\cdot)$ for the unstabilized weights $\color{NavyBlue}{W_{t_{*,j}^a}}$ from Proposition~\ref{prop:unstable_weights}. It also consists of a linear input layer $\nu_\phi^W$, a neural vector field $f_\phi^W$, and two linear output layers $\mu_\phi^{W,I}$ and $\mu_\phi^{W,P}$, which are trained to estimate the treatment intensity and propensity, respectively.

\underline{Training:} The weight network receives samples $\bar{H}_\tau=\bar{h}_{i,\tau}$. After $h_{i,0}$ is transformed into $z_{i,0}^W$ via $\nu_\phi^W$, the latent state $z_{i,t}^W$ of the weight network evolves as
\begin{align}
    \diff z_{i,t}^W = z_{i,0}^W+\int_0^t f_\phi^W(z_{i,s}^W, s) \diff [{h}_{i,s-}],
\end{align}
where $ \diff [{h}_{i,s-}]$ denotes Riemann-Stieltjes integration w.r.t. the control path. \emph{Case~(1):}~As for the stability network, the weight network estimates the probability of a treatment decision at time $t$ through the intensity layer $\mu_\phi^{W,I}$ via
\begin{align}
    \mathcal{L}_t^{W,I}= \text{BCE}\Big(\mu_\phi^{W,I}(z_{i,t}^W), \diff {n}_{i,t}^a\Big),
\end{align}
and, hence, learns an estimator of the treatment intensity function via
\begin{align}
    \hat{\lambda}_0^a(t\mid \bar{H}_{t-})=\mu_\phi^{T,I}(Z_{t}^W).
\end{align}
\emph{Case~(2):}~If a treatment decision is made at time $T_j^a=t_{i,j}^a$, the latent state is also passed through the propensity layer $\mu_\phi^{W,P}$ and jointly trained via
\begin{align}
    \mathcal{L}_j^{W,P}= \text{CE}\Big(\mu_\phi^{W,P}(z_{i,t_{i,j}^a}^W), {a}_{i,t_{i,j}^a}\Big),
\end{align}
such that our \method learns an estimator of the propensity score as
\begin{align}
\hat{\pi}_0^a(a_t \mid \bar{H}_{t-})=\mu_\phi^{W,P}(Z_{t}^W).
\end{align}

\underline{Inverse propensity weight:} After training, the weight network \emph{again} receives the training samples $\bar{H}_{\tau}=\bar{h}_{i,\tau}$ and estimates the unstabilized inverse propensity weights according to Proposition~\ref{prop:unstable_weights} via
\begin{align}
   \pint\limits_{s\geq t} {\color{NavyBlue}\hat{w}_s} (\bar{h}_{i,s-}) 
   = \prod_{j=1}^J \frac{\exp  \int_{s \in [t_{i,j-1}^a,t_{i,j}^a)} \hat{\lambda}_0^a(s\mid \bar{H}_{s-}=\bar{h}_{i,s-}) \diff s  }
     {\hat{\lambda}_0^a(t_{i,j}^a\mid \bar{H}_{t_{i,j}^a-}=\bar{h}_{t_{i,j}^a-}) \, \hat{\pi}_{0,t_{i,j}^a} (a_{i,t_{i,j}^a}\mid \bar{H}_{t_{*,j}^a-}=\bar{h}_{t_{i,j}^a-} )}.
\end{align}

\circledgray{E}~\textbf{Encoder:} The encoder computes a latent representation of the history, which is then passed to the decoder. It consists of a linear input layer $\nu_\phi^E$, a neural vector field $f_\phi^E$, and an output layer $\mu_\phi^E$.

\underline{Training:}~The encoder receives samples $\bar{H}_\tau=\bar{h}_{i,\tau}$. First, it transforms $\bar{h}_{i,0}$ into $z_{i,0}^E$ via $\nu_\phi^E$. The latent state $z_{i,t}^E$ then evolves according to
\begin{align}
    z_{i,t}^E = z_{i,0}^E+ \int_0^t f_\phi^E(z_{i,s}^E, s) \diff [{h}_{i,s-}],
\end{align}
where $\diff [{h}_{i,s-}]$ denotes Riemann-Stieltjes integration w.r.t. the control path. At jumping times $n_{i,t}^x$, we pass the latent  state $z_{i,t}$ to the encoder output layer $\mu_\phi^{E}$ and minimize the mean squared error (MSE) loss for outcomes at the jumping times via
\begin{align}
     \mathcal{L}_t^{E}= \text{MSE}\Big(\mu_\phi^{E}(z_{i,t}^E, a_{i,t}), {y}_{i,t}\Big).
\end{align}

\circledgray{D}~\textbf{Decoder:} The decoder receives as input: (i)~the encoded history of the encoder and (ii)~the future sequence of treatments. Then, it outputs an estimate of the \rebuttal{CAPOs} by adjusting for time-varying confounding. It has a linear input layer $\nu_\phi^D$, a neural vector field $f_\phi^D$ and an output layer $\mu_\phi^D$.

\underline{Training:} During training, the decoder receives the final latent representation $z_{i,t}^E$ of the encoder as well as the observed treatments $(\ubar{A}_{t},\ubar{N}_{t}^a)=(\ubar{a}_{i,t},\ubar{n}_{i,t}^a)$. It then transforms the encoder representation through $z_{i,t}^D=\nu_\phi^D(z_{i,t}^E)$ and computes
\begin{align}
    z_{i,\tau}^D = z_{i,t}^D+ \int_{t}^\tau f_\phi^D(z_{i,s}^D, s) \diff [{a}_{i,s-}, n_{i,s-}^a],
\end{align}
where $\diff [{a}_{i,s-}, n_{i,s-}^a]$ denotes Riemann-Stieltjes integration w.r.t. the control path. At time $\tau$, we pass $z_{i,\tau}^D$ through the linear output layer $\mu_\phi^D$. Importantly, the decoder is trained by minimizing the MSE loss\textbf{ weighted by the stabilized weights}, i.e.,
\begin{align}
    \mathcal{L}_\tau^{D}= \prod_{j=1}^J {\color{BrickRed}\hat{\widetilde{w}}_{t_{i,j}^a}} \text{MSE}\Big(\mu_\phi^{D}(z_{i,\tau}^D, a_{i,\tau}), {y}_{i,\tau}\Big),
\end{align}
where
\begin{align}
    {\color{BrickRed}\hat{\widetilde{w}}_{t_{i,j}^a}} \equiv {\color{BrickRed}\hat{\widetilde{w}}_{t_{i,j}^a}}(\bar{h}_{i,t_{i,j}^a}) = {\color{BrickRed}\hat{\xi}_{t_{i,j}^a}}(\bar{a}_{i,t_{i,j}^a},\bar{n}_{i,t_{i,j}^a}^a){\color{NavyBlue}\hat{w}_{t_{i,j}^a}}(\bar{h}_{i,t_{i,j}^a}),
\end{align}
using the stability network ${\color{BrickRed}\hat{\xi}_s}(\cdot)$ and the weight network ${\color{NavyBlue}\hat{w}_s}(\cdot)$, respectively. By Proposition~\ref{prop:prop1}, we thereby target the optimal model parameters $\hat{\phi}$ which, unlike existing methods, explicitly adjust for time-varying confounding in continuous time.

\subsection{Inference}
\label{sec:inferences}

In order to estimate \rebuttal{CAPOs} for an observed history $\bar{H}_{t-}=\bar{h}_{k,t-}$ and a future sequence of treatments $(\ubar{A}_t,\ubar{N}_t^a)=(\ubar{a}_{*,t},\ubar{n}_{*,t}^a)$, we first encode the history via
\begin{align}
    z_{k,t}^E = z_{k,0}^E+ \int_0^t f_\phi^E(z_{k,s}^E, s) \diff [{h}_{k,s-}].
\end{align}
This latent representation is then passed to the decoder along with $(\ubar{a}_{*,t},\ubar{n}_{*,t}^a)$ in order to compute the final representation
\begin{align}
    z_{k,\tau}^D = \nu_\phi^D(z_{k,t}^E)+ \int_{t}^\tau f_\phi^D(z_{k,s}^D, s) \diff [{a}_{*,s-}, n_{*,s-}^a].
\end{align}
Finally, the output layer $\mu_\phi^D$ of the decoder is used to estimate the \rebuttal{CAPO} at time $\tau$, given the history $\bar{H}_{t-}=\bar{h}_{k,t-}$, via
\begin{align}
    \hat{\mathbb{E}}\Big[ Y_\tau [\ubar{a}_{*,t}, \ubar{n}_{*,t}^a]  \;\big|\;  \bar{H}_{t-}=\bar{h}_{k,t-} \Big] = \mu_\phi^D(z_{k,\tau}^D, a_{k,\tau}).
\end{align}

\section{Numerical Experiments}\label{sec:experiments}

\textbf{\underline{Baselines:}} We now demonstrate the performance of our \method against key neural baselines for estimating \rebuttal{CAPOs} over time (see Table~\ref{tab:table_method_overview}). Importantly, our choice of baselines and datasets is consistent with prior literature \citep[e.g.,][]{Bica.2020c,Lim.2018,  Melnychuk.2022, Seedat.2022}. Further, we report the performance of the \ablation ablation, where we directly train the decoder with the unstabilized weights. This allows us to understand the performance gain of our stabilized weights. Note that the \ablation ablation is still a new method (as no other neural method performs proper adjustments for time-varying confounding in continuous time). 

\textbf{\underline{Datasets:}}  We use a (i)~synthetic dataset based on a \textbf{tumor growth model} \citep{Geng.2017}, and a (ii)~semi-synthetic dataset based on the \textbf{MIMIC-III dataset} \citep{Johnson.2016}. For both datasets, the outcomes are simulated, so that we have access to the ground-truth \emph{potential} outcomes, which allows for comparing the performance in terms of root mean squared error (RMSE). We report the mean $\pm$ the standard deviation over five runs with different seeds. We perform rigorous hyperparameter tuning for all baselines to ensure a fair comparison (see Supp.~\ref{appendix:hparams}).

\textbf{\underline{Tumor growth data:}} Tumor growth models are widely used for estimating \rebuttal{CAPOs} over time. \rebuttal{It is a model for the evolution of lung cancer $Y_t$ under radio therapy treatment $A_t^c$ and chemo therapy treatment $A_t^r$ (details in Supp.~\ref{appendix:data_gen_synth}}). Here, we follow \citet{Vanderschueren.2023}, where observation times are at arbitrary, irregular timestamps. \rebuttal{However, our setup has two differences: (i)~We are not primarily interested in informative sampling times. Instead, we consider a scenario with observation times completely at random. Hence, observation times follow a Hawkes process with constant intensity. (ii)~Further, we do not consider an RCT setting. Instead, treatment assignments are confounded via $A_t^c,A_t^r\sim\text{Ber}((\gamma/D_{\text{max}}(\bar{D}_{15}(\bar{Y}_{t-1}-\bar{D}_\text{max}/2))$, where $\gamma$ controls the confounding strength, and $D_{\text{max}}$ and $\bar{D}_{15}$ are the maximum and average tumor diameter over the last $15$ days, respectively. For evaluation, we then generate \emph{ground truth potential outcomes} under hard interventions that would be unobserved in real-world data. For this, we randomly sample treatment sequences, irrespective of the history, as is done in \citep{Melnychuk.2022}}. Below, we vary the prediction horizons (up to three days ahead) and the confounding strength.
We provide details in Supp.~\ref{appendix:data_gen_synth}.

\begin{figure}[h]
\vspace{-0.4cm}
  \centering
  \begin{subfigure}{0.32\linewidth}
    \centering
    \includegraphics[width=\linewidth]{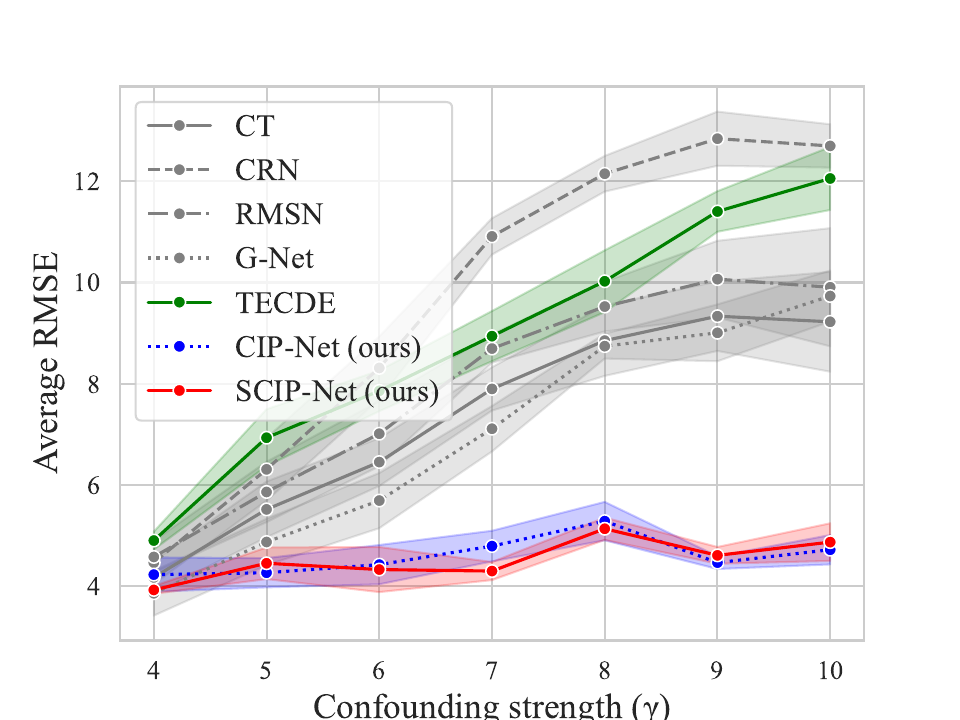}
    \caption{One day ahead prediction}
  \end{subfigure}
  \begin{subfigure}{0.32\linewidth}
    \centering
    \includegraphics[width=\linewidth]{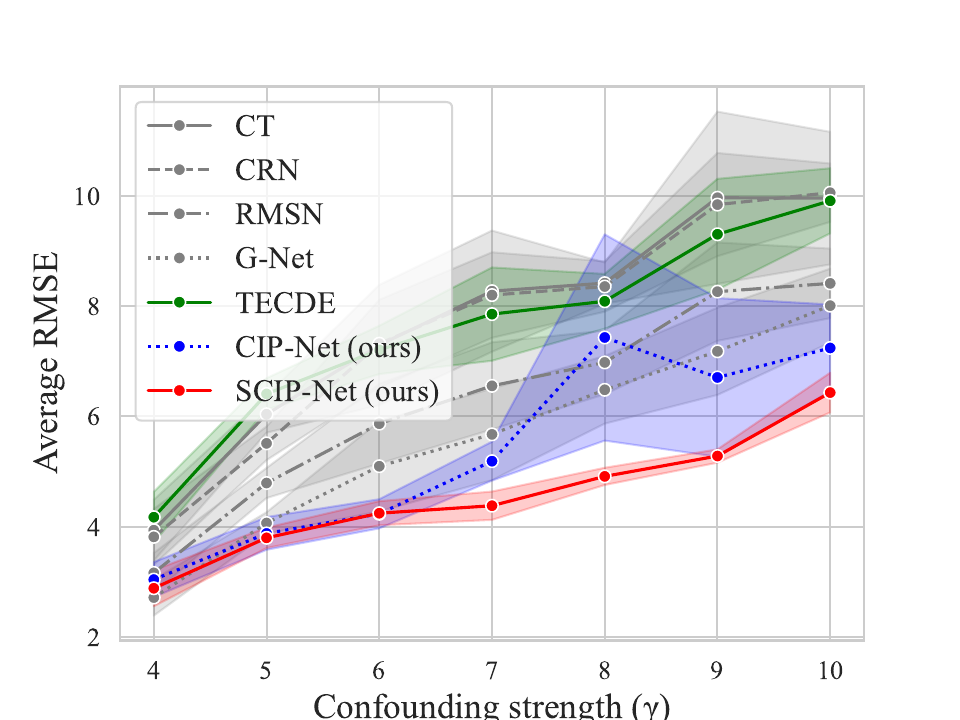}
    \caption{Two days ahead prediction}
  \end{subfigure}
  \begin{subfigure}{0.32\linewidth}
    \centering
    \includegraphics[width=\linewidth]{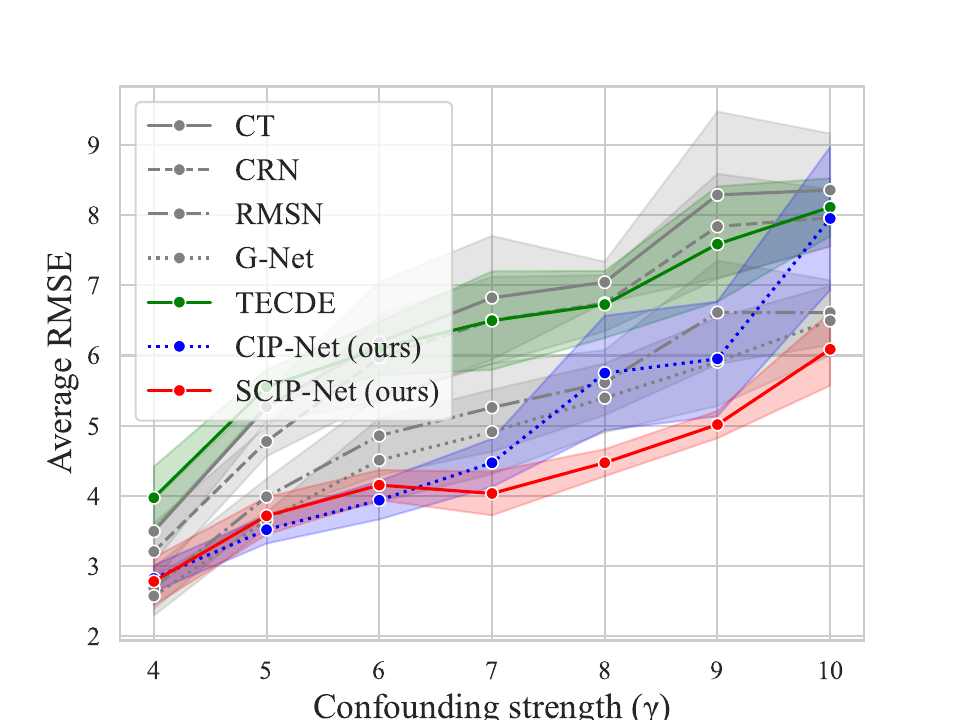}
    \caption{Three days ahead prediction}
  \end{subfigure}
  \vspace{-0.2cm}
  \caption{\textbf{Performance for the tumor growth model.} We compare different forecast horizons and different confound strengths. Shown is the average RMSE \rebuttal{of the potential outcomes under hard interventions }over five seeds.
  \emph{Our proposed \method performs best, followed by our \ablation.}}\label{fig:cancer_sim}
  \vspace{-0.2cm}
\end{figure}

The results are in Fig.~\ref{fig:cancer_sim}. We make the following observations: \textbf{(1)}~Our proposed \method performs best. The performance gains become especially obvious for increasing levels of time-varying confounding because ours is the first method to perform proper adjustments in continuous time. 
\textbf{(2)}~Our proposed \method performs more robust than the \ablation ablation, which demonstrates the effectiveness of our stabilized weights over the unstabilized weights. The stabilized weights help estimating more accurately especially for larger prediction horizons and strong confounding. \textbf{(3)}~Nevertheless, our \ablation, has a competitive performance. \textbf{(4)}~The only baseline designed for continuous time is TE-CDE (shown in {green}), which we outperform by a large margin. \textbf{(5)}~All other neural baselines are instead designed for discrete time (shown in \textcolor{darkgray}{gray}) and are outperformed clearly.


\textbf{\underline{MIMIC-III data:}}
Our experiments are based on the MIMIC-III extract from \citet{Wang.2020}. Here, we use real-world covariates at irregular measurement timestamps, and then simulate treatments and outcomes, respectively. This is done analogous to \citep{Melnychuk.2022}, such that we have access to the ground-truth potential outcomes. For the outcome variable, we additionally apply a random observation mask in order to mimic observations at arbitrary, irregular timestamps. \rebuttal{We provide more details in Supp.~\ref{appendix:data_gen_semisynth}.}

\begin{table}[h]
\vspace{-0.2cm}
    \centering
     \resizebox{\textwidth}{!}{   
        \begin{tabular}{l|ccccc|ccc}
        \toprule
        Prediction window
        & CT 
        & CRN 
        & RMSNs 
        & G-Net 
        & TE-CDE 
        & \textbf{\ablation}~(ours)
        & \textbf{\method}~(ours)  
        & \textbf{Rel. improvement}\\
        \midrule
        {$(\tau-t)=1$ hours}  & $1.052\pm0.069$ & $1.049\pm0.065$ &  $1.075\pm0.074$ & $1.021\pm 0.069$ & $0.915\pm0.025$ & $\bm{0.876\pm 0.041}$ & $0.877\pm0.044$  
        & \greentext{$+4.1\%$}\\
        {$(\tau-t)=2$ hours}  &  $1.196\pm0.272$ & $1.088\pm0.374$ & $1.130\pm0.274$ & $1.095\pm 0.335$ & $0.784\pm0.145$ & $0.785\pm0.117$ & $\bm{0.634\pm0.148}$ 
        & \greentext{$+19.1\%$}\\
        {$(\tau-t)=3$ hours} &  $1.444\pm0.232$ & $1.262\pm0.355$ & $1.300\pm0.304$ & $1.330\pm 0.198$ & $1.240\pm 0.242$ & $1.291\pm0.400$ & $\bm{1.089\pm0.322}$
        & \greentext{$+12.2\%$}\\
        \bottomrule
        \end{tabular}
        }
    \vspace{-0.2cm}
    \caption{\textbf{Performance for MIMIC-III.} Reported is the average RMSE \rebuttal{of the potential outcomes under hard interventions }over five seeds. We highlight the relative improvement of our \method over existing baselines. Again, \emph{our proposed \method performs best}.}
    \label{tab:semi_synth_results}
    \vspace{-0.2cm}
\end{table}

Table~\ref{tab:semi_synth_results} shows the results for different prediction windows. \textbf{(1)}~Our \method has the lowest error among all methods and performs thus again best. \textbf{(2)}~Our \ablation ablation, which is a new method in itself, again has highly competitive performance. \textbf{(3)}~Yet, the ablation shows that using stabilized weights as in \method (as opposed to the unstabilized weights as in \ablation) makes a large contribution to the overall performance.

\textbf{\underline{Conclusion:}} To the best of our knowledge, \method is the first neural method for \emph{estimating \rebuttal{conditional average} potential outcomes through proper adjustments for time-varying confounding in continuous time}. For this, we first derive a \emph{tractable expression for inverse propensity weighting} in continuous time. Then, we propose \emph{stabilized weights} in continuous time to stabilize the training objective. Our experiments show that our \method has clear benefits over existing baselines when observation times and treatment times take place at arbitrary, irregular timestamps.

\newpage
\section*{Acknowledgments}
This work has been supported by the German Federal Ministry of Education and Research (Grant: 01IS24082).

\bibliography{literature}
\clearpage
\appendix

\section{Extended related work}
\label{appendix:related_work}
\rebuttal{\textbf{Irregular time series:} Outside of the causal literature, stochastic processes in continuous time and irregular time series are a heavily studied area of research, in particular in terms of stochastic processes \citep{Gardiner.1985}. Directly related to the continuous time literature are non-uniformly spaced observations and their impact on inference \citep{Marvasti.2001}. Irregular observation times can also be seen as a special case of missing data \citep{Rubin.1976}. More recently, neural network architectures handling irregularity involve explicit incorporation of time gaps into recurrent units \citep{Baytas.2017}, or otherwise handling missing values in recurrent neural networks \citep{Che.2018, Brouwer.2019}. Further, transformers have been adapted for irregular time series \citep{Yuqi.2023} and neural point processes \cite{Du.2016, Shchur.2021}. Finally, and directly related to our work, neural controlled differential equations \citep{Kidger.2020, Kidger.2021, Morrill.2021} can directly handle irregular time series by design. However, these works focus on traditional estimation and \emph{not} causal inference tasks. In other words, the above methods solve a \emph{different} task and would be \emph{biased} in our setting.}

\textbf{\rebuttal{Conditional average} potential outcomes in the static setting:} A large body of research focuses on methods for estimating conditional average potential outcomes in the static setting \citep[e.g.,][]{Curth.2021b, Hatt.2021b, Johansson.2016, Kallus.2019, Ma.2024, Schroeder.2024b, Shalit.2017}. However, they cannot adequately address the complexities of time-varying data, which are crucial in healthcare applications where patient conditions evolve over time (e.g., EHRs \citep{Allam.2021, Bica.2021b}, wearable devices \citep{Battalio.2021,Murray.2016}).

\textbf{Nonparametric methods in continuous time:} Some nonparametric methods have been proposed  \citep{Hatt.2021, Hizli.2023, Schulam.2017, Soleimani.2017b, Xu.2016}, but they suffer from scalability issues. Further, some struggle with static and high-dimensional covariates, complex outcome distributions, or impose additional identifiability assumptions. As a result, we focus on neural methods for their flexibility and scalability.

\textbf{Additional research directions:} \citet{Hess.2024} propose a neural method for uncertainty quantification of \rebuttal{conditional average} potential outcomes in continuous time. Further, \citet{Vanderschueren.2023} develop a framework to account for bias due to informative observation times. Both approaches are \emph{orthogonal} to our work and do \emph{not} focus on time-varying confounding. Further, additional research directions in discrete time consider nonparametric learners \citep{Frauen.2025} and robust estimation of the G-computation formula \citep{Hess.2024b}. Other works focus on learning treatment effects from noisy proxies \citep{Kuzmanovic.2021}. Finally, there are efforts to leverage neural methods for estimating average potential outcomes \rebuttal{\citep{Frauen.2023b, Shirakawa.2024}. However, these methods are \emph{not} applicable to our setting, as the do \emph{not} operate in continuous time and do \emph{not} focus on estimating CAPOs.}

\clearpage

\section{Product integration}
\label{appendix:product_integration}

Product integration is a useful tool for describing stochastic processes and their joint likelihoods in continuous time \citep[e.g., in survival analysis; see][]{Gill.1990, Rytgaard.2022, Rytgaard.2023}. In the following, we provide a brief introduction to product integration. Importantly, the type that we refer to is also known as the \emph{geometric integral} or the \emph{multiplicative integral} \citep{Bashirov.2011}.

The product integral $\pint$ can intuitively be thought of as the infinitesimal limit of the product operator $\Pi$, or, equivalently, as the multiplicative version of the Riemann-integral $\int$. That is, the standard Riemann-integral is defined as
\begin{align}
    \int\limits_{[a,b]} f(x) \diff x = \lim_{\Delta x \to 0} \sum_{i} f(\bar{x}_i) {\Delta x},
\end{align}
where $\Delta x = x_{i+1}-x_{i}$ and $\bar{x}\in (x_i, x_{i+1})$ for a partition $\bigcup_i [x_i, x_{i+1}]$ of $[a,b]$. 

The \textbf{product-integral} is analogously defined as
\begin{align}
    \pint\limits_{[a,b]} f(x)^{\diff x} = \lim_{\Delta x \to 0} \prod_i f(\bar{x}_i)^{\Delta x}.
\end{align}

For the proofs in Supp.~\ref{appendix:proofs}, we make use of the identity
\begin{align}
    \pint\limits_{[a,b]} f(x)^{\diff x} = \exp\Big( \int\limits_{[a,b]} \log f(x) \diff x \Big),
\end{align}
which holds since
\begin{align}
     &\pint\limits_{[a,b]} f(x)^{\diff x}
     = \lim_{\Delta x \to 0} \prod_i f(\bar{x})^{\Delta x}\\
    =&  \lim_{\Delta x \to 0} \exp \Big( \log \prod_i f(\bar{x}_i)^{\Delta x} \Big) \\
    =&  \lim_{\Delta x \to 0} \exp \Big(  \sum_i \log f(\bar{x}_i)^{\Delta x} \Big) \\
    =& \exp \Big( \lim_{\Delta x \to 0}  \sum_i \log f(\bar{x}_i)^{\Delta x} \Big) \\
    =& \exp \Big(  \int\limits_{[a,b]} \log f(x) \diff x \Big).
\end{align}

Similar to the additivity of the standard integral $\int$, the product integral $\pint$ is \emph{multiplicative}. That is, we have that for $a < b < c$
\begin{align}
    \pint\limits_{[a,c]} f(x)^{\diff x} = \pint\limits_{[a,b]} f(x)^{\diff x} \pint\limits_{[b,c]} f(x)^{\diff x},
\end{align}
which can be shown with the above exp-log identity
\begin{align}
    &\pint\limits_{[a,c]} f(x)^{\diff x} = \exp\Big( \int\limits_{[a,c]} \log f(x) \diff x \Big)\\
    =& \exp\Big( \int\limits_{[a,b]} \log f(x) \diff x + \int\limits_{[b,c]} \log f(x) \diff x \Big)\\
    =& \exp\Big( \int\limits_{[a,b]} \log f(x) \diff x\Big) \exp\Big( \int\limits_{[b,c]} \log f(x) \diff x \Big)\\
    =& \pint\limits_{[a,b]} f(x)^{\diff x} \pint\limits_{[b,c]} f(x)^{\diff x}.
\end{align}

\clearpage

\section{Neural Differential Equations} \label{appendix:neural_cde}

We provide a brief summary of neural ordinary differential equations and neural controlled differential equations, similar to that in \citep{Hess.2024}.

\textbf{\underline{Neural ODEs:}} Neural ordinary differential equations (ODEs) \citep{Chen.2018, Haber.2018,Lu.2018} integrate neural networks with ordinary differential equations. In a neural ODE, the neural network $f_{\phi}(\cdot)$ defines the vector field of the initial value problem 
\begin{align}
Z_t = \int_0^t f_{\phi}(Z_s, s) \diff s, \quad Z_0 = X.
\end{align}
Thereby, a neural ODE captures the continuous evolution of hidden states $Z_t$ over a time scale. Thereby, it learns a continuous flow of transformations, where the input $X = Z_0$ is passed through an ODE solver to obtain the output $\hat{Y} = Z_{\tau}$ (possibly, after another output transformation).

While neural ODEs are generally suitable for describing a continuous time evolution, they have an important limitation: \emph{all data need to be captured in the initial value}. Hence, they are not capable of updating their hidden states as new data becomes available over time, as is the case for, e.g., electronic health records.

\textbf{\underline{Neural CDEs:}} Neural controlled differential equations (CDEs) \citep{Kidger.2020} overcome the above limitation. Put simply, one can think of them as a continuous-time counterpart to recurrent neural networks. Given a path of data $X_t \in \mathbb{R}^{d_x}$, $t \in [0, \tau]$, a neural CDE consists of an embedding network $\nu_\phi(\cdot)$, a readout network $\mu_\phi(\cdot)$, and a neural vector field $f_{\phi}$. Then, the neural CDE is defined as
\begin{align}
\hat{Y} = \mu_\phi(Z_{\tau}), \quad Z_t = \int_0^t f_{\phi}(Z_s, s) \diff [X_s], \quad t \in (0, \tau] \quad \text{with} \quad Z_0 = \nu_\phi(X_0),
\end{align}
where $Z_t \in \mathbb{R}^{d_z}$ and $f_{\phi}(Z_t, t) \in \mathbb{R}^{d_z \times d_x}$. The integral here is a \emph{Riemann-Stieltjes integral}, where $f_{\phi}(Z_s, s) \diff [X_s]$ corresponds to matrix multiplication. In this context, the neural differential equation is \emph{controlled} by the process $[X_s]$. Importantly, one can rewrite it (under some regularity conditions) as
\begin{align} Z_t = \int_0^t f_{\phi}(Z_s, s) \frac{\diff X_s}{\diff s} \diff s, \quad t \in (0, \bar{T}].
\end{align}
Calculating the time derivative requires a $C^1$-representation of the data $X_t$ for all $t \in [0,\tau]$. As a result, irregularly sampled observations $((t_0, X_0), (t_1, X_1), \ldots, (t_n, X_n)$ must be interpolated over time, producing a continuous representation $X_t$. Here, we can use any interpolation scheme such as linear interpolation \citep{Morrill.2021} as in TE-CDE \citep{Seedat.2022}. Then, the neural CDE essentially reduces to a neural ODE for optimization.

The \textbf{key difference} to neural ODEs is, however, that the neural vector field is controlled by sequentially incoming data and, therefore, updates the hidden states as data becomes available over time. This is a \textbf{clear advantage over ODEs}, which require capturing all data in the initial value and because of which neural ODEs are \emph{not} suitable.

\clearpage

\section{Proofs of propositions}\label{appendix:proofs}
\setcounter{proposition}{0}
\setcounter{definition}{0}

\begin{proposition}
    Under assumptions (i)--(iii), we can estimate the \rebuttal{conditional average} potential outcome from observational data (i.e., from data sampled under $\diff \mathbb{P}_0$) via \textbf{inverse propensity weighting}, that is,
    \begin{align}
        \mathbb{E}\Big[Y_\tau [\ubar{a}_{*, t}, \ubar{n}_{*, t}^a] \;\Big|\; \bar{H}_{t-}=\bar{h}_{t-} \Big] =\mathbb{E}\Big[ Y_\tau \pint\limits_{s\geq t} {\color{NavyBlue} W_s } \;\Big|\; (\ubar{A}_{t}, \ubar{N}_{t}^a)=(\ubar{a}_{*, t}, \ubar{n}_{*, }^a), \bar{H}_{t-}=\bar{h}_{t-} \Big] ,
    \end{align}
    where the \textcolor{NavyBlue}{inverse propensity weights} for $s\geq t$ are defined as
    \begin{align}
        {\color{NavyBlue} W_s } \equiv {\color{NavyBlue}w_s}(\bar{H}_s) = \frac{\diff G_{*,s}(\bar{H}_{s})}{\diff G_{0,s}(\bar{H}_{s})}.
    \end{align}
\end{proposition}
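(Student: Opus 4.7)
The plan is to prove Proposition~\ref{prop:1} via a Radon-Nikodym / change-of-measure argument adapted to the marked point process framework of \citet{Rytgaard.2022,Rytgaard.2023}. The proof has three conceptual steps: (a)~translate the counterfactual target into an expectation under $\mathbb{P}_*$ using (i) and (iii); (b)~compute the Radon-Nikodym derivative $d\mathbb{P}_*/d\mathbb{P}_0$ from the factorization of the likelihood; and (c)~apply the resulting change of measure.

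For step (a), unconfoundedness gives conditional independence of $Y_\tau[\ubar{a}_{*,t}, \ubar{n}_{*,t}^a]$ from $(\ubar{A}_t, \ubar{N}_t^a)$ given $\bar{H}_{t-}$, which lets me insert the conditioning event that the observed treatment path coincides with the intervention; consistency then strips the counterfactual bracket, so that
\[
\mathbb{E}[Y_\tau[\ubar{a}_{*,t}, \ubar{n}_{*,t}^a] \mid \bar{H}_{t-}=\bar{h}_{t-}] \;=\; \mathbb{E}_{\mathbb{P}_*}[Y_\tau \mid \bar{H}_{t-}=\bar{h}_{t-}].
\]
For step (b), the factorizations $\diff \mathbb{P}_0 = \mu_{0,0}(X_0)\pint \diff G_{0,s}\,\diff Q_{0,s}$ and $\diff \mathbb{P}_* = \mu_{0,0}(X_0)\pint \diff G_{*,s}\,\diff Q_{0,s}$ share a common covariate part $Q$, so by multiplicativity of the product integral (see Supp.~\ref{appendix:product_integration}) and the positivity assumption (ii),
\[
\frac{\diff \mathbb{P}_*(\bar{H}_\tau)}{\diff \mathbb{P}_0(\bar{H}_\tau)} \;=\; \pint\limits_{s\in(0,\tau]} \frac{\diff G_{*,s}(\bar{H}_s)}{\diff G_{0,s}(\bar{H}_s)} \;=\; \pint\limits_{s\in(0,\tau]} W_s.
\]
Because the intervention only modifies the treatment law for $s \geq t$ (pre-$t$ treatments are absorbed into the conditioning $\bar{H}_{t-}$), one has $W_s \equiv 1$ for $s < t$, and the product integral collapses to $\pint_{s\geq t} W_s$. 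For step (c), applying the change of measure and restricting to the support of the degenerate hard-intervention measure (on which $\diff G_{*,s}$, and hence $W_s$, concentrates) converts the expectation under $\mathbb{P}_*$ into the conditional expectation under $\mathbb{P}_0$ on the event $(\ubar{A}_t, \ubar{N}_t^a) = (\ubar{a}_{*,t}, \ubar{n}_{*,t}^a)$, giving the identity in the proposition.

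The main obstacle I expect is the rigorous handling of the product-integral Radon-Nikodym derivative when $\diff G_{*,s}$ is degenerate (Dirac/indicator measures arising from the hard intervention), and correctly tracking how the multiplicative normalizing factors along the intervention path reduce to the conditional form on the RHS without leaving spurious density terms. The core algebra mirrors the discrete-time sequential IPW derivation \citep{Robins.1986,Robins.2009}, so the challenge is primarily measure-theoretic bookkeeping rather than conceptual.
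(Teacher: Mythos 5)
Your overall strategy is the same change-of-measure argument the paper uses: factorize the likelihood into the covariate part $Q_0$ and treatment part $G$, invoke positivity and the multiplicativity of the product integral to form $\pint_{s\geq t} W_s$, and collapse onto the (degenerate) support of the hard intervention. Your steps (b) and (c) correspond directly to the paper's use of (ii) and of multiplicativity, and your remark that $W_s\equiv 1$ for $s<t$ matches the paper's restriction of the product integral to $s\geq t$.

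However, step (a) as you justify it has a genuine gap. Inserting the conditioning event via (iii) and stripping the bracket via (i) yields $\mathbb{E}_{\mathbb{P}_0}\big[Y_\tau \mid (\ubar{A}_{t},\ubar{N}_{t}^a)=(\ubar{a}_{*,t},\ubar{n}_{*,t}^a),\,\bar{H}_{t-}=\bar{h}_{t-}\big]$, an \emph{unweighted} conditional expectation under the observational law, not $\mathbb{E}_{\mathbb{P}_*}[Y_\tau\mid \bar{H}_{t-}=\bar{h}_{t-}]$ with $\mathbb{P}_*=\mathbb{P}_{Q_0,G_*}$. Equating these two is precisely what time-varying confounding forbids: under $\mathbb{P}_0$, conditioning on the entire future treatment path tilts the law of the post-$t$ covariates (future treatment times and choices respond to future covariates), whereas under $\mathbb{P}_{Q_0,G_*}$ the kernels $\diff Q_{0,s}$ are left untouched. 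Indeed, if your step (a) were valid as argued, the CAPO would already be identified by plain conditioning and the weights $\pint_{s\geq t}W_s$ in the proposition would be superfluous, which they are not (they are not a.s.\ equal to one under the conditional observational law). The paper instead performs the identification directly on the likelihood: it writes the CAPO as $\int y_\tau[\ubar{a}_{*,t},\ubar{n}_{*,t}^a]\,\pint_{s\geq t}\diff Q_{0,s}\,\diff G_{0,s}$, substitutes $\diff G_{0,s}\mapsto\diff G_{*,s}$ while keeping $\diff Q_{0,s}$ fixed (the continuous-time g-formula step, where exchangeability does its real work), then uses (iii) to condition the measures on the treatment event, (i) to strip the counterfactual bracket, and only then (ii) to multiply and divide by $\diff G_{0,s}$ and split the product integral. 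To repair your proof, replace your step (a) with this substitution argument (or an explicit sequential derivation that the potential-outcome law is $\mathbb{P}_{Q_0,G_*}$); your steps (b)--(c) then go through essentially as the paper's final equalities.
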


\begin{proof}
The proof follows similar ideas as in \citep{Rytgaard.2022}:
{\small
\begin{align}
&\mathbb{E}\Big[Y_\tau [\ubar{a}_{*, t}, \ubar{n}_{*,  t}^a]\mid  \bar{H}_{t-}=\bar{h}_{t-}\Big]\\
&=\int y_\tau [\ubar{a}_{*,  t}, \ubar{n}_{*,  t}^a] \pint\limits_{s\geq t} \diff {Q}_{0,s}(\bar{h}_s) \diff {G}_{0,s}(\bar{h}_s)\\
 &=\int y_\tau [\ubar{a}_{*,  t}, \ubar{n}_{*,  t}^a] \pint\limits_{s\geq t} \diff Q_{0,s}(\bar{h}_{s}) \diff G_{*,s}(\bar{h}_{s})\\
    &\underbrace{=}_{\text{(iii)}}\int  y_\tau[\ubar{a}_{*,  t}, \ubar{n}_{*,  t}^a] \pint\limits_{s\geq t}  \diff Q_{0,s}(\bar{h}_{s}\mid (\ubar{A}_t, \ubar{N}_{t}^a)=(\ubar{a}_{*, t}, \ubar{n}_{*, t}^a)) \diff G_{*,s}(\bar{h}_{s}\mid (\ubar{A}_t, \ubar{N}_{t}^a)=(\ubar{a}_{*, t}, \ubar{n}_{*, t}^a))\\
    &\underbrace{=}_{\text{(i)}}\int  y_\tau \pint\limits_{s\geq t}  \diff Q_{0,s}(\bar{h}_{s}\mid (\ubar{A}_t, \ubar{N}_{t}^a)=(\ubar{a}_{*, t}, \ubar{n}_{*, t}^a)) \diff G_{*,s}(\bar{h}_{s}\mid (\ubar{A}_t, \ubar{N}_{t}^a)=(\ubar{a}_{*, t}, \ubar{n}_{*, t}^a))\\
    &\underbrace{=}_{\text{(ii)}}\int  y_\tau \pint\limits_{s\geq t} \frac{\diff G_{*,s}(\bar{h}_{s}\mid (\ubar{A}_t, \ubar{N}_{t}^a)=(\ubar{a}_{*, t}, \ubar{n}_{*, t}^a))}{\diff G_{0,s}(\bar{h}_{s}\mid (\ubar{A}_t, \ubar{N}_{t}^a)=(\ubar{a}_{*, t}, \ubar{n}_{*, t}^a))}\nonumber\\
    &\times\pint\limits_{s\geq t} \diff Q_{0,s}(\bar{h}_{s}\mid (\ubar{A}_t, \ubar{N}_{t}^a)=(\ubar{a}_{*, t}, \ubar{n}_{*, t}^a))\diff G_{0,s}(\bar{h}_{s}\mid (\ubar{A}_t, \ubar{N}_{t}^a)=(\ubar{a}_{*, t}, \ubar{n}_{*, t}^a))\label{eq:multiplicativity1}\\
    &= \mathbb{E}\Big[Y_\tau \pint\limits_{s\geq t} w_s(\bar{H}_s \mid (\ubar{A}_t, \ubar{N}_{t}^a)=(\ubar{a}_{*, t}, \ubar{n}_{*,  t}^a))  \;\Big|\;(\ubar{A}_t, \ubar{N}_{t}^a)=(\ubar{a}_{*, t}, \ubar{n}_{*,  t}^a),\bar{H}_{t-}=\bar{h}_{t-}\Big]\\
    &= \mathbb{E}\Big[Y_\tau \pint\limits_{s\geq t} w_s(\bar{H}_s) \;\Big|\;(\ubar{A}_t, \ubar{N}_{t}^a)=(\ubar{a}_{*, t}, \ubar{n}_{*,  t}^a),\bar{H}_{t-}=\bar{h}_{t-}\Big]\\
    &= \mathbb{E}\Big[Y_\tau \pint\limits_{s\geq t} W_s\;\Big|\;(\ubar{A}_t, \ubar{N}_{t}^a)=(\ubar{a}_{*, t}, \ubar{n}_{*,  t}^a),\bar{H}_{t-}=\bar{h}_{t-}\Big],
\end{align}
}
where \Eqref{eq:multiplicativity1} uses the multiplicativity of the product integral.
\end{proof}

\clearpage

\begin{proposition}
Let $t_{*,0}^a=t$ for notational convenience. The unstabilized weights in \Eqref{eq:weighted_expectation} satisfy
{\small
\begin{align}
    &\pint\limits_{s\geq t} {\color{NavyBlue} W_s } \;\Big|\; \Big( (\ubar{A}_{ t}, \ubar{N}_{ t}^a)=(\ubar{a}_{*, t}, \ubar{n}_{*, t}^a), \bar{H}_{t-}=\bar{h}_{t-} \Big) 
     =& \prod_{j=1}^J {\color{NavyBlue} W_{t_{*,j}^a} } \;\Big|\; \Big( ( \ubar{A}_{ t}, \ubar{N}_{ t}^a)=(\ubar{a}_{*, t}, \ubar{n}_{*, t}^a), \bar{H}_{t-}=\bar{h}_{t-} \Big) ,
\end{align}
}
where
\begin{align}
    {\color{NavyBlue} W_{t_{*,j}^a} } = \frac{\exp  \int_{s \in [t_{*,j-1}^a,t_{*,j}^a)} \lambda_0^a(s\mid \bar{H}_{s-}) \diff s  }
     {\lambda_0^a(t_{*,j}^a\mid \bar{H}_{t_{*,j}^a-}) \, \pi_{0,t_{*,j}^a} (a_{*,t_{*,j}^a}\mid \bar{H}_{t_{*,j}^a-} )}.
\end{align}
\end{proposition}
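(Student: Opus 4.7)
The plan is to exploit the structure of the hard intervention under which the interventional intensity $\diff\Lambda_*^a$ puts all its mass on the finite set $\{t_{*,j}^a\}_{j=1}^J$ and the interventional propensity $\pi_{*,s}$ is a point mass at $a_{*,s}$. This concentrates the product integral onto a finite number of contributions and makes the remaining ``survival'' factors reduce to ordinary exponentials of integrated intensities.

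First, I would apply the multiplicativity of the product integral from Supp.~\ref{appendix:product_integration} to split
\begin{align}
\pint\limits_{s\geq t} W_s = \prod_{j=1}^{J} \Bigg( \pint\limits_{s\in[t_{*,j-1}^a, t_{*,j}^a)} W_s \Bigg) \cdot W_{t_{*,j}^a},
\end{align}
with the convention $t_{*,0}^a = t$. This isolates the ``between-treatment'' intervals from the atoms at the treatment times.

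Next, I would analyse each continuous piece. On $[t_{*,j-1}^a, t_{*,j}^a)$ the interventional counting process has no jump, so $N^a(\diff s)=0$ and $\diff\Lambda_*^a(s\mid\bar H_{s-})=0$, and $W_s$ collapses to $(1-\diff\Lambda_0^a(s\mid\bar H_{s-}))^{-1}$. Applying the exp-log identity of Supp.~\ref{appendix:product_integration} then gives
\begin{align}
\pint\limits_{s\in[t_{*,j-1}^a, t_{*,j}^a)} (1-\lambda_0^a(s\mid\bar H_{s-}) \diff s)^{-1} = \exp\!\Bigg(\int_{t_{*,j-1}^a}^{t_{*,j}^a} \lambda_0^a(s\mid\bar H_{s-}) \diff s\Bigg),
\end{align}
which is exactly the numerator of \Eqref{eq:unstable_weights}. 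At the atom $s = t_{*,j}^a$, $N^a(\diff s)=1$, and $W_{t_{*,j}^a}$ is the Radon-Nikod\'ym ratio of jump contributions. Since the interventional propensity satisfies $\pi_{*,t_{*,j}^a}(a_{*,t_{*,j}^a}\mid\cdot)=1$ and the interventional jump has unit mass, the symmetric factors of $\diff t$ cancel in the ratio as in \citet{Rytgaard.2022}, leaving $W_{t_{*,j}^a} = 1/(\lambda_0^a(t_{*,j}^a\mid\bar H_{t_{*,j}^a-}) \, \pi_{0,t_{*,j}^a}(a_{*,t_{*,j}^a}\mid\bar H_{t_{*,j}^a-}))$. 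Multiplying the continuous and atomic pieces yields the stated product.

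The main obstacle I expect is the rigorous treatment of the ratio at the atoms, since $\diff\Lambda_*^a$ is a discrete (Dirac-type) measure while $\diff\Lambda_0^a = \lambda_0^a \diff s$ is absolutely continuous, so the two measures are mutually singular and the ratio is not literally a density. The cleanest way to handle this is to follow the convention of \citet{Rytgaard.2022, Rytgaard.2023}, approximating the hard intervention by a sequence of soft interventions concentrated near each $t_{*,j}^a$ and verifying that in the limit the infinitesimal $\diff t$ factors cancel symmetrically in the numerator and denominator of $W_{t_{*,j}^a}$. Once this limiting identity is established, all remaining steps are bookkeeping using multiplicativity of $\pint$ and the exp-log identity.
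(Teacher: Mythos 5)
Your proof is correct and takes essentially the same route as the paper's: the paper likewise separates the jump factors from the survival factors, applies the exp--log identity so the survival part becomes $\exp\big(\int \lambda_0^a(s\mid \bar{H}_{s-})\diff s\big)$, and uses the hard intervention to trivialize the interventional intensity and propensity at the treatment atoms before regrouping into the per-interval product of \Eqref{eq:unstable_weights}. The only differences are cosmetic: the paper computes globally over $s\geq t$ and regroups at the end rather than splitting interval-by-interval up front, and it handles the atoms via the formal convention $\lambda_*^a(s)=\mathbbm{1}_{\{t_{*,j}^a\}_{j=1}^J}(s)$ (so the jump ratio is directly $1/(\lambda_0^a\,\pi_{0})$ while $\int\lambda_*^a(s)\diff s=0$), rather than through your soft-intervention limiting argument for the mutually singular measures.
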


\begin{proof}
The inverse propensity weight in \Eqref{eq:weighted_expectation} is by definition given by
\begin{align}
    \pint\limits_{s\geq t} \frac{\diff G_{*,s}(\bar{H}_{s})}{\diff G_{0,s}(\bar{H}_{s})}
    = \pint\limits_{s\geq t} \left( \frac{\diff\Lambda_*^a(s\mid \bar{H}_{s-}) \pi_{*,s} (A_s\mid \bar{H}_{s-})}{ \diff\Lambda_0^a(s\mid \bar{H}_{s-}) \pi_{0,s} (A_s\mid \bar{H}_{s-} )} \right)^{N^a (\diff s)}
    \left( \frac{1-\diff\Lambda_*^a(s\mid \bar{H}_{s-})}{1-\diff\Lambda_0^a(s\mid \bar{H}_{s-})}  \right)^{1-N^a( \diff s)}.
\end{align}
We simplify the first part via
\begin{align}
    &\pint\limits_{s\geq t} \left( \frac{\diff\Lambda_*^a(s\mid \bar{H}_{s-}) \pi_{*,s} (A_s\mid \bar{H}_{s-})}{ \diff\Lambda_0^a(s\mid \bar{H}_{s-} )\pi_{0,s} (A_s\mid \bar{H}_{s-} )} \right)^{N^a (\diff s)}\\
    =& \pint\limits_{s\geq t} \left( \frac{\lambda_*^a(s\mid \bar{H}_{s-}) \pi_{*,s} (A_s\mid \bar{H}_{s-}) \diff s}{ \lambda_0^a(s\mid \bar{H}_{s-}) \pi_{0,s} (A_s\mid \bar{H}_{s-} )\diff s} \right)^{N^a (\diff s)}\\
    =& \exp \left[ \int_{s\geq t} \log \left( \frac{\lambda_*^a(s\mid \bar{H}_{s-}) \pi_{*,s} (A_s\mid \bar{H}_{s-})}{ \lambda_0^a(s\mid \bar{H}_{s-} )\pi_{0,s} (A_s\mid \bar{H}_{s-} )} \right) {N^a (\diff s)} \right] \\
     =& \exp \left[ \sum_{T_j^a\in \mathcal{T}_\tau^a \setminus\mathcal{T}_t^a} \log \left( \frac{\lambda_*^a(T_j^a\mid \bar{H}_{T_j^a-}) \pi_{*,T_j^a} (A_{T_j^a} \mid \bar{H}_{T_j^a-})}{ \lambda_0^a(T_j^a\mid \bar{H}_{T_j^a-} )\pi_{0,T_j^a} (A_{T_j^a}\mid \bar{H}_{T_j^a-} )} \right) \right] \\
    =& \prod_{T_j^a\in \mathcal{T}_\tau^a\setminus\mathcal{T}_t^a} \frac{\lambda_*^a(T_j^a\mid \bar{H}_{T_j^a-})  \pi_{*,T_j^a} (A_{T_j^a}\mid \bar{H}_{T_j^a-})}
     {\lambda_0^a(T_j^a\mid \bar{H}_{T_j^a-}) \pi_{0,T_j^a} (A_{T_j^a}\mid \bar{H}_{T_j^a-} )}.
\end{align}
Now, since we condition on $ \ubar{N}_{ t}^a= \ubar{n}_{*, t}^a$, we know that the jumping times are given by 
\begin{align}
    \mathcal{T}_\tau^a\setminus\mathcal{T}_t^a =\{t_{*,j}^a\}_{j=1}^J.
\end{align}
Hence, we have that
\begin{align}
    &\prod_{T_j^a\in \mathcal{T}_\tau^a} \frac{\lambda_*^a(T_j^a\mid \bar{H}_{T_j^a-})  \pi_{*,T_j^a} (A_{T_j^a}\mid \bar{H}_{T_j^a-})}
     {\lambda_0^a(T_j^a\mid \bar{H}_{T_j^a-}) \pi_{0,T_j^a} (A_{T_j^a}\mid \bar{H}_{T_j^a-} )}
     \mid\Big( (\ubar{A}_{ t}, \ubar{N}_{ t}^a)=(\ubar{a}_{*, t}, \ubar{n}_{*, t}^a), \bar{H}_{t-}=\bar{h}_{t-}\Big)\\
     &= \prod_{j=1}^J \frac{\lambda_*^a(t_{*,j}^a\mid \bar{H}_{t_{*,j}^a-})  \pi_{*,t_{*,j}^a} (a_{*,t_{*,j}^a}\mid \bar{H}_{t_{*,j}^a-})}
     {\lambda_0^a(t_{*,j}^a\mid \bar{H}_{t_{*,j}^a-}) \pi_{0,t_{*,j}^a} (a_{*,t_{*,j}^a}\mid \bar{H}_{t_{*,j}^a-} )}
     \mid\Big( (\ubar{A}_{ t}, \ubar{N}_{ t}^a)=(\ubar{a}_{*, t}, \ubar{n}_{*, t}^a), \bar{H}_{t-}=\bar{h}_{t-}\Big)\\
     &= \prod_{j=1}^J \Big({\lambda_0^a(t_{*,j}^a\mid \bar{H}_{t_{*,j}^a-}) \pi_{0,t_{*,j}^a} (a_{*,t_{*,j}^a}\mid \bar{H}_{t_{*,j}-}^a )}\Big)^{-1}
     \mid \Big((\ubar{A}_{ t}, \ubar{N}_{ t}^a)=(\ubar{a}_{*, t}, \ubar{n}_{*, t}^a), \bar{H}_{t-}=\bar{h}_{t-}\Big).\label{eq:first_part}
\end{align}
Further, we can simplify the second part via
\begin{align}
     &\pint\limits_{s\geq t} \left( \frac{1-\diff\Lambda_*^a(s\mid \bar{H}_{s-})}{1-\diff\Lambda_0^a(s\mid \bar{H}_{s-})}  \right)^{1-N^a( \diff s)}\\
    =& \pint\limits_{s\geq t}\left( \frac{1-\mathbb{P}_*(N^a(\diff s)=1 \mid \bar{H}_{s-})}{1-\mathbb{P}_0(N^a(\diff s)=1 \mid \bar{H}_{s-})}  \right)^{1-N^a( \diff s)}\\
    =& \lim_{K\to \infty} \prod\limits_{k=1}^K \left( \frac{\mathbb{P}_*(N^a([t_k, t_{k+1})) =0 \mid \bar{H}_{t_k-})}{\mathbb{P}_0(N^a([t_k, t_{k+1}))=0 \mid \bar{H}_{t_k-})}  \right)^{1-N^a([t_k, t_{k+1}))}\\
    =& \exp \Big[ -\int_{s\geq t} \lambda_*^a(s\mid \bar{H}_{s-}) \diff s \Big] 
    / \exp \Big[ -\int_{s\geq t} \lambda_0^a(s\mid \bar{H}_{s-}) \diff s \Big] \\
    =& \exp \Big[ \int_{s\geq t} \lambda_0^a(s\mid \bar{H}_{s-}) \diff s - \int_{s\geq t} \lambda_*^a(s\mid \bar{H}_{s-}) \diff s \Big],
\end{align}
where $\bigcup_{k=1}^K [t_k,t_{k+1})$ is a disjoint partition of $[t,\tau]$ with  ${\lim_{K\to \infty} \max_{k\leq K} (t_{k+1}-t_k ) = 0}$. Finally, again by conditioning on $\ubar{N}_{t}^a=\ubar{n}_{*,t}^a$, we have that 
\begin{align}
    \int_{s\geq t} \lambda_*^a(s\mid \bar{H}_{s-}) \diff s \;\Big|\; \Big(\ubar{N}_{*,t}^a=\ubar{n}_{*,t}^a\Big) = \int_{s\geq t} \mathbbm{1}_{\{t_{*,j}^a\}_{j=1}^J}(s) \diff s \;\Big|\; \Big(\ubar{N}_{*,t}^a=\ubar{n}_{*,t}^a\Big)=0, 
\end{align}
which leaves
\begin{align}
     \pint\limits_{s\geq t} \left( \frac{1-\diff\Lambda_*^a(s\mid \bar{H}_{s-})}{1-\diff\Lambda_0^a(s\mid \bar{H}_{s-})}  \right)^{1-N^a( \diff s)} \;\Big|\;  \Big((\ubar{A}_{ t}, \ubar{N}_{ t}^a)=(\ubar{a}_{*, t}, \ubar{n}_{*, t}^a), \bar{H}_{t-}=\bar{h}_{t-}\Big)\\
     =\exp \Big[ \int_{s\geq t} \lambda_0^a(s\mid \bar{H}_{s-}) \diff s \Big] \;\Big|\; \Big((\ubar{A}_{ t}, \ubar{N}_{ t}^a)=(\ubar{a}_{*, t}, \ubar{n}_{*, t}^a), \bar{H}_{t-}=\bar{h}_{t-}\Big).\label{eq:second_part}
\end{align}
Combining \Eqref{eq:first_part} with \Eqref{eq:second_part}, the unstabilized weights are then given by
\begin{align}
    &\pint\limits_{s\geq t} W_s \;\Big|\; \Big( (\ubar{A}_{ t}, \ubar{N}_{ t}^a)=(\ubar{a}_{*, t}, \ubar{n}_{*, t}^a), \bar{H}_{t-}=\bar{h}_{t-}\Big)\\
     =& \pint\limits_{s\geq t} \frac{\diff G_{*,s}(\bar{H}_{s})}{\diff G_{0,s}(\bar{H}_{s})} \;\Big|\; \Big( (\ubar{A}_{ t}, \ubar{N}_{ t}^a)=(\ubar{a}_{*, t}, \ubar{n}_{*, t}^a), \bar{H}_{t-}=\bar{h}_{t-}\Big)\\
     =& \exp \Big[ \int_{s\geq t} \lambda_0^a(s\mid \bar{H}_{s-}) \diff s  \Big] \nonumber\\ &\times\prod_{j=1}^J 
     \Big(\lambda_0^a(t_{*,j}^a\mid \bar{H}_{t_{*,j}^a-}) \pi_{0,t_{*,j}^a} (a_{*,t_{*,j}^a}\mid \bar{H}_{t_{*,j}^a-} )\Big)^{-1} \;\Big|\; \Big((\ubar{A}_{ t}, \ubar{N}_{ t}^a)=(\ubar{a}_{*, t}, \ubar{n}_{*, t}^a), \bar{H}_{t-}=\bar{h}_{t-}\Big)\\
     =& \prod_{j=1}^J \frac{\exp  \int_{s \in [t_{*,j-1}^a,t_{*,j}^a)} \lambda_0^a(s\mid \bar{H}_{s-}) \diff s  }
     {\lambda_0^a(t_{*,j}^a\mid \bar{H}_{t_{*,j}^a-}) \pi_{0,t_{*,j}^a} (a_{*,t_{*,j}^a}\mid \bar{H}_{t_{*,j}^a-} )} \;\Big|\; \Big( (\ubar{A}_{ t}, \ubar{N}_{ t}^a)=(\ubar{a}_{*, t}, \ubar{n}_{*, t}^a), \bar{H}_{t-}=\bar{h}_{t-}\Big)\\
     =& \prod_{j=1}^J W_{t_{*,j}^a} \;\Big|\; \Big( (\ubar{A}_{ t}, \ubar{N}_{ t}^a)=(\ubar{a}_{*, t}, \ubar{n}_{*, t}^a), \bar{H}_{t-}=\bar{h}_{t-}\Big).
\end{align}
\end{proof}

\clearpage

\begin{definition}
For $s\geq t$, let the scaling factor ${\color{BrickRed} \Xi_s }$ be given by the ratio of the marginal transition probabilities of treatment, that is,
    \begin{align}
     {\color{BrickRed} \Xi_s } \equiv {\color{BrickRed} \xi_s }(\bar{A}_s, \bar{N}_s^a) = \frac{\diff G_{0,s}(\bar{A}_s, \bar{N}_s^a)}{\diff G_{*,s}(\bar{A}_s, \bar{N}_s^a)}.
    \end{align}
    We define the {\color{BrickRed}\textbf{stabilized weights} $\widetilde{W}_s$} as 
    \begin{align}
        {\color{BrickRed} \widetilde{W}_s } = {\color{BrickRed} \Xi_s } {\color{NavyBlue} W_s } .
    \end{align}
\end{definition}

\begin{proposition}
    The optimal parameters $\hat{\phi}$ in \Eqref{eq:unstable_opt} can equivalently be obtained by
    \small
    \begin{align}
        \hat{\phi}=\argmin_\phi \mathbb{E}_{\mathbb{P}_0} \Bigg[ \Big(Y_\tau - m_{\phi}(\ubar{A}_{t}, \ubar{N}_{t}^a,\bar{H}_{t-})\Big)^2 \pint\limits_{s\geq t} {\color{BrickRed} \widetilde{W}_s } \;\Bigg|\;  (\ubar{A}_{ t}, \ubar{N}_{ t}^a)=(\ubar{a}_{*, t}, \ubar{n}_{*, t}^a), \bar{H}_{t-}=\bar{h}_{t-} \Bigg] . 
    \end{align}
    \normalsize
\end{proposition}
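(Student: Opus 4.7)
The plan is to show that the product integral $\pint_{s\geq t} \Xi_s$ collapses to a deterministic positive constant $C$ under the conditioning in \Eqref{eq:stable_opt}, and then to factor $C$ out of the conditional expectation, leaving an objective that differs from \Eqref{eq:unstable_opt} only by a multiplicative constant independent of $\phi$; since $\argmin$ is invariant under positive rescaling, the two optimizers must coincide.

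First, I would verify that $\Xi_s$ depends only on the (marginal) treatment process. By Definition~\ref{def:stabilized_weights}, $\Xi_s = \xi_s(\bar{A}_s, \bar{N}_s^a)$ is the ratio of marginal transition probabilities of treatment, which by construction do not involve the covariate path. Conditioning on $(\ubar{A}_t, \ubar{N}_t^a) = (\ubar{a}_{*,t}, \ubar{n}_{*,t}^a)$ and $\bar{H}_{t-} = \bar{h}_{t-}$ pins down the entire treatment trajectory $(\bar{A}_s, \bar{N}_s^a)$ on $[0,\tau]$: the past from $\bar{h}_{t-}$ and the future from the intervention. Therefore $\Xi_s$ evaluated along this path is deterministic for every $s \geq t$, and the product integral $C \equiv \pint_{s\geq t} \Xi_s$ is a deterministic, strictly positive (by assumption~(ii)) function of $\bar{h}_{t-}$ and the intervention---in particular, $C$ is free of $\phi$.

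Next, using the multiplicativity of the product integral together with $\widetilde{W}_s = \Xi_s W_s$, I would obtain $\pint_{s\geq t} \widetilde{W}_s = C \cdot \pint_{s\geq t} W_s$ under the conditioning. Plugging this into \Eqref{eq:stable_opt} and pulling $C$ out of the conditional expectation (it is measurable with respect to the conditioning sigma-algebra) yields
\[
\mathbb{E}_{\mathbb{P}_0}\!\Big[(Y_\tau - m_{\phi})^2 \pint_{s\geq t} \widetilde{W}_s \,\Big|\, \cdots\Big] \;=\; C \cdot \mathbb{E}_{\mathbb{P}_0}\!\Big[(Y_\tau - m_{\phi})^2 \pint_{s\geq t} W_s \,\Big|\, \cdots\Big].
\]
Since $C > 0$ does not depend on $\phi$, the $\argmin_\phi$ of both sides agree, which is exactly the claimed equivalence between \Eqref{eq:stable_opt} and \Eqref{eq:unstable_opt}.

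The main obstacle I anticipate is the first step: rigorously arguing that $\Xi_s$ is genuinely a function only of the marginal treatment process rather than of the full history $\bar{H}_s$. This rests on the fact that $G_{0,s}$ and $G_{*,s}$ are the marginal treatment measures (with covariates integrated out), so nothing random remains in $\Xi_s$ once the treatment trajectory is fixed by the conditioning. A cleaner alternative, if one is willing to appeal forward, would be to invoke Proposition~\ref{prop:scaling_factor} to rewrite $\pint_{s\geq t} \Xi_s$ as the finite product in \Eqref{eq:scaling_factor_reduced}, each factor of which is manifestly determined by the observed and interventional treatment sequences; this renders the constancy of $C$ transparent at the cost of relying on the later result.
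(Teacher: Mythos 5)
Your proposal is correct and follows essentially the same route as the paper's own proof: you argue that $\Xi_s$ depends only on the treatment path, which is fully pinned down by the conditioning on $\bar{h}_{t-}$ and the intervention, so $\pint_{s\geq t}\Xi_s$ is a positive constant independent of $\phi$, and then you factor it out of the conditional expectation and use invariance of the $\argmin$ under positive rescaling — exactly the argument given in Supp.~\ref{appendix:proofs}. The only difference is cosmetic: the paper writes the constant explicitly as $\xi_s$ evaluated on the concatenated observational/interventional treatment path rather than appealing to Proposition~\ref{prop:scaling_factor}, which your proposal correctly treats as an optional shortcut rather than a necessity.
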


\begin{proof}
The scaling factor is a constant conditionally on the future and past treatment propensity and frequency, i.e.,
\begin{align}
&\Xi_s \mid (\ubar{A}_{t}, \ubar{N}_{ t}^a)=(\ubar{a}_{*, t}, \ubar{n}_{*, t}^a), \bar{H}_{t-}=\bar{h}_{t-}\\
=& \Xi_s \mid (\ubar{A}_{t}, \ubar{N}_{ t}^a)=(\ubar{a}_{*, t}, \ubar{n}_{*, t}^a), (A_{t-},\bar{N}_{ t-}^a)=(\bar{a}_{ t-}, \bar{n}_{t-}^a)\\
=& \xi_s([\bar{a}_{*, s},\bar{a}_{t-} ], [\bar{n}_{*, s}^a,\bar{n}_{ t-}^a])\\
\equiv & \text{const.},
\end{align}
where we use 
\begin{align}
 [\bar{a}_{*, s},\bar{a}_{t-}] = \Big(\bigcup_{t \leq r \leq t} a_{*, r}\Big) \cup \Big(\bigcup_{r < t} a_{r}\Big)
\quad \text{and} \quad 
    [\ubar{n}_{*, s}^a,\ubar{n}_{*, t-}^a] = \Big(\bigcup_{t \leq r \leq t} n_{*,r}^a \Big) \cup \Big(\bigcup_{r < t} n_{r}^a\Big)
\end{align}
for the concatenation of interventional and observational treatments at time $s\geq t$. Hence, with $\Xi_s \in \mathbb{R}^+$, we can use linearity of the expectation and multiplicativity of the product integral, such that
\small
\begin{align}
    &\argmin_\phi \mathbb{E}_{\mathbb{P}_0}\Big[\Big((Y_\tau - m_{\phi}(\ubar{A}_{t}, \ubar{N}_{t}^a,\bar{H}_{t-}))^2  \pint\limits_{s\geq t} \widetilde{W}_s \Big) \;\Big|\; (\ubar{A}_{ t}, \ubar{N}_{t}^a)=(\ubar{a}_{*, t}, \ubar{n}_{*,t}^a), \bar{H}_{t-}=\bar{h}_{t-}\Big]\\
    =&\argmin_\phi \mathbb{E}_{\mathbb{P}_0}\Big[\Big((Y_\tau - m_{\phi}(\ubar{A}_{t}, \ubar{N}_{t}^a,\bar{H}_{t-}))^2  \pint\limits_{s\geq t} \Xi_s W_s \Big) \;\Big|\; (\ubar{A}_{t}, \ubar{N}_{t}^a)=(\ubar{a}_{*, t}, \ubar{n}_{*, t}^a), \bar{H}_{t-}=\bar{h}_{t-}\Big]\\
    =&\argmin_\phi  \Bigg\{  \Big(\pint\limits_{s\geq t} \xi_s([\bar{a}_{*, s},\bar{a}_{t-} ], [\bar{n}_{*, s}^a,\bar{n}_{ t-}^a])\Big)\\
    &\times\Big(\mathbb{E}_{\mathbb{P}_0}\Big[(Y_\tau - m_{\phi}(\ubar{A}_{t}, \ubar{N}_{t}^a,\bar{H}_{t-}))^2  \pint\limits_{s\geq t} W_s \;\Big|\; (\ubar{A}_{t}, \ubar{N}_{t}^a)=(\ubar{a}_{*, t}, \ubar{n}_{*, t}^a), \bar{H}_{t-}=\bar{h}_{t-}\Big] \Big)\Bigg\}\\
    =&\argmin_\phi \mathbb{E}_{\mathbb{P}_0}\Big[(Y_\tau - m_{\phi}(\ubar{A}_{ t}, \ubar{N}_{ t}^a,\bar{H}_{t-}))^2  \pint\limits_{s\geq t} W_s \;\Big|\; (\ubar{A}_{ t}, \ubar{N}_{ t}^a)=(\ubar{a}_{*, t}, \ubar{n}_{*,  t}^a), \bar{H}_{t-}=\bar{h}_{t-}\Big]\\
    =& \hat{\phi}.
\end{align}
\normalsize
\end{proof}

\clearpage

\begin{proposition}
Let $t_{*,0}^a=t$ for notational convenience. The scaling factor ${\color{BrickRed} \Xi_s }$ from \Eqref{eq:scaling_factor} then satisfies
    \begin{align}
        & \pint\limits_{s\geq t} {\color{BrickRed} \Xi_s } \;\Big|\; \Big( (\ubar{A}_{ t}, \ubar{N}_{ t}^a)=(\ubar{a}_{*, t}, \ubar{n}_{*, t}^a),  (\bar{A}_{t-}, \bar{N}_{t-}^a)=(\bar{a}_{t-}, \bar{n}_{t-}^a) \Big) \\
        = & \prod_{j=1}^J {\color{BrickRed}  \Xi_{t_{*,j}^a} } \;\Big|\; \Big( (\ubar{A}_{ t}, \ubar{N}_{ t}^a)=(\ubar{a}_{*, t}, \ubar{n}_{*, t}^a),  (\bar{A}_{t-}, \bar{N}_{t-}^a)=(\bar{a}_{t-}, \bar{n}_{t-}^a) \Big) ,
    \end{align}
    where
    \begin{align}
        {\color{BrickRed}  \Xi_{t_{*,j}^a} }
        = \frac{\lambda_0^a(t_{*,j}^a\mid \bar{A}_{t_{*,j}^a-}, \bar{N}_{t_{*,j}^a-}^a) \,
        \pi_{0,t_{*,j}^a} (a_{t_{*,j}^a}\mid \bar{A}_{t_{*,j}^a-}, \bar{N}_{t_{*,j}^a-}^a)}
        {\exp  \int_{s \in [t_{*,j-1}^a,t_{*,j}^a)} \lambda_0^a(s\mid \bar{A}_{s-},\bar{N}_{s-}^a)\diff s}.
    \end{align}
\end{proposition}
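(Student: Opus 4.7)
The plan is to mirror the argument used in Proposition~\ref{prop:unstable_weights}, since the scaling factor ${\color{BrickRed}\Xi_s}$ is structurally the reciprocal of the inverse propensity weight ${\color{NavyBlue}W_s}$, except that it is expressed in terms of the marginal treatment history $(\bar{A}_s,\bar{N}_s^a)$ instead of the full history $\bar{H}_s$. By Definition~\ref{def:stabilized_weights} together with the expressions for $\diff G_{0,s}$ and $\diff G_{*,s}$, the product integral factors as
\begin{align}
\pint\limits_{s\geq t} {\color{BrickRed}\Xi_s}
= \pint\limits_{s\geq t} \left(\frac{\diff\Lambda_0^a(s\mid \bar{A}_{s-},\bar{N}_{s-}^a)\,\pi_{0,s}(A_s\mid \bar{A}_{s-},\bar{N}_{s-}^a)}{\diff\Lambda_*^a(s\mid \bar{A}_{s-},\bar{N}_{s-}^a)\,\pi_{*,s}(A_s\mid \bar{A}_{s-},\bar{N}_{s-}^a)}\right)^{\!N^a(\diff s)}\!\!
\left(\frac{1-\diff\Lambda_0^a(s\mid \bar{A}_{s-},\bar{N}_{s-}^a)}{1-\diff\Lambda_*^a(s\mid \bar{A}_{s-},\bar{N}_{s-}^a)}\right)^{\!1-N^a(\diff s)}.
\end{align}

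First I would handle the ``jump part'' $(N^a(\diff s)=1)$. Using the exp-log identity for $\pint$ reviewed in Supp.~\ref{appendix:product_integration}, the product integral over the counting measure $N^a(\diff s)$ reduces to a finite product over the jumping times in $\mathcal{T}_\tau^a\setminus\mathcal{T}_t^a$. Conditioning on $\ubar{N}_t^a=\ubar{n}_{*,t}^a$ pins these jumping times to the set $\{t_{*,j}^a\}_{j=1}^J$, and conditioning on $\ubar{A}_t=\ubar{a}_{*,t}$ pins the treatment values $A_{t_{*,j}^a}=a_{*,t_{*,j}^a}$. This collapses the numerator contribution $\diff\Lambda_*^a\cdot \pi_{*,\cdot}$ in the denominator of $\Xi_s$ to $1$ at each jump (since the intervention places a Dirac at $t_{*,j}^a$ and degenerate mass on $a_{*,t_{*,j}^a}$), leaving the intensity-propensity term $\lambda_0^a(t_{*,j}^a\mid \cdot)\,\pi_{0,t_{*,j}^a}(a_{t_{*,j}^a}\mid \cdot)$ in the numerator of $\Xi_{t_{*,j}^a}$.

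Second I would handle the ``non-jump part'' $(N^a(\diff s)=0)$. By the same partitioning/limit argument as in the proof of Proposition~\ref{prop:unstable_weights}, this factor equals
\begin{align}
\exp\!\Big[\int_{s\geq t}\lambda_*^a(s\mid \bar{A}_{s-},\bar{N}_{s-}^a)\diff s - \int_{s\geq t}\lambda_0^a(s\mid \bar{A}_{s-},\bar{N}_{s-}^a)\diff s\Big].
\end{align}
Under the hard intervention in \Eqref{eq:hard_interventions}, $\lambda_*^a$ is a sum of Dirac masses concentrated on $\{t_{*,j}^a\}$, so its Lebesgue integral vanishes, leaving only $\exp\!\big[-\!\int_{s\geq t}\lambda_0^a(s\mid\cdot)\diff s\big]$ in the denominator. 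Splitting this integral across the intervals $[t_{*,j-1}^a,t_{*,j}^a)$ and multiplicativity of $\pint$ then regroup everything into the advertised product $\prod_{j=1}^J \Xi_{t_{*,j}^a}$.

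The main obstacle, and the only genuinely new bookkeeping beyond Proposition~\ref{prop:unstable_weights}, is verifying that replacing the conditioning set $\bar{H}_{t-}$ by the coarser $(\bar{A}_{t-},\bar{N}_{t-}^a)$ is consistent with the definition of $\Xi_s$. Concretely, one must check that $\diff G_{0,s}$ and $\diff G_{*,s}$ are measurable with respect to the filtration generated by $(\bar{A},\bar{N}^a)$ alone when we define $\xi_s$ as in Definition~\ref{def:stabilized_weights}, so that no covariate information $X$ is implicitly smuggled into the intensities appearing in the final formula. Once that measurability is recorded, the remainder is bookkeeping identical to the earlier proof.
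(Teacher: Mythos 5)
Your proposal matches the paper's own proof essentially step for step: the paper likewise mirrors the argument of Proposition~\ref{prop:unstable_weights}, splitting the product integral into the jump part (which the exp-log identity and conditioning on $\ubar{N}_t^a=\ubar{n}_{*,t}^a$ collapse to the finite product of $\lambda_0^a\pi_{0,\cdot}$ over $\{t_{*,j}^a\}_{j=1}^J$, with the interventional terms equal to one under the hard intervention) and the non-jump part (where the Lebesgue integral of $\lambda_*^a$ vanishes, leaving $\exp[-\int\lambda_0^a\,\diff s]$), then regroups by multiplicativity into $\prod_{j=1}^J\Xi_{t_{*,j}^a}$. Your closing remark about the coarser conditioning set is handled in the paper simply by the definition of $\xi_s(\bar{A}_s,\bar{N}_s^a)$, whose intensities and propensities are by construction conditioned only on the treatment history, so no additional argument is needed there.
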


\begin{proof}
    For the proof, we follow the steps as in the proof of Proposition~\ref{prop:unstable_weights}.
    
    By definition, $\Xi_s$ satisfies
    {\small
    \begin{align}
    &\pint\limits_{s\geq t} \Xi_s
    =\pint\limits_{s\geq t} \frac{\diff G_{0,s}(\bar{A}_{s}, \bar{N}_s^a)}{\diff G_{*,s}(\bar{A}_{s}, \bar{N}_s^a))}\\
    =& \pint\limits_{s\geq t} \left( \frac{\diff\Lambda_0^a(s\mid \bar{A}_{s-}, \bar{N}_{s-}^a)) \pi_{0,s} (A_s\mid \bar{A}_{s-}, \bar{N}_{s-}^a)}{ \diff\Lambda_*^a(s\mid \bar{A}_{s-}, \bar{N}_{s-}^a) \pi_{*,s} (A_s\mid \bar{A}_{s-}, \bar{N}_{s-}^a )} \right)^{N^a (\diff s)}
    \left( \frac{1-\diff\Lambda_0^a(s\mid \bar{A}_{s-}, \bar{N}_{s-}^a)}{1-\diff\Lambda_*^a(s \mid \bar{A}_{s-}, \bar{N}_{s-}^a)}  \right)^{1-N^a( \diff s)}
\end{align}
}
First, we simplify 
{\small
\begin{align}
    &\pint\limits_{s\geq t} \left( \frac{\diff\Lambda_0^a(s\mid \bar{A}_{s-}, \bar{N}_{s-}^a) \pi_{0,s} (A_s\mid \bar{A}_{s-}, \bar{N}_{s-}^a)}{ \diff\Lambda_*^a(s\mid \bar{A}_{s-}, \bar{N}_{s-}^a )\pi_{*,s} (A_s\mid \bar{A}_{s-}, \bar{N}_{s-}^a )} \right)^{N^a (\diff s)}\\
    =& \pint\limits_{s\geq t} \left( \frac{\lambda_0^a(s\mid \bar{A}_{s-}, \bar{N}_{s-}^a) \pi_{0,s} (A_s\mid \bar{A}_{s-}, \bar{N}_{s-}^a) \diff s}{ \lambda_*^a(s\mid \bar{A}_{s-}, \bar{N}_{s-}^a) \pi_{*,s} (A_s\mid \bar{A}_{s-}, \bar{N}_{s-}^a)\diff s} \right)^{N^a (\diff s)}\\
    =& \exp \left[ \int_{s\geq t} \log \left( \frac{\lambda_0^a(s\mid \bar{A}_{s-}, \bar{N}_{s-}^a) \pi_{0,s} (A_s\mid \bar{A}_{s-}, \bar{N}_{s-}^a)}{ \lambda_*^a(s\mid \bar{A}_{s-}, \bar{N}_{s-}^a)\pi_{*,s} (A_s\mid \bar{A}_{s-}, \bar{N}_{s-}^a)} \right) {N^a (\diff s)} \right] \\
     =&\exp \left[ \sum_{T_j^a\in \mathcal{T}_\tau^a \setminus\mathcal{T}_t^a} \log \left( \frac{\lambda_0^a(T_j^a\mid \bar{A}_{T_j^a-},\bar{N}_{T_j^a-}^a) \pi_{0,T_j^a} (A_{T_j^a} \mid \bar{A}_{T_j^a-},\bar{N}_{T_j^a-}^a)}{ \lambda_*^a(T_j^a\mid \bar{A}_{T_j^a-},\bar{N}_{T_j^a-}^a)\pi_{*,T_j^a} (A_{T_j^a}\mid \bar{A}_{T_j^a-},\bar{N}_{T_j^a-}^a )} \right) \right] \\
    =& \prod_{T_j^a\in \mathcal{T}_\tau^a\setminus\mathcal{T}_t^a} \frac{\lambda_0^a(T_j^a\mid \bar{A}_{T_j^a-},\bar{N}_{T_j^a-}^a)  \pi_{0,T_j^a} (A_{T_j^a}\mid \bar{A}_{T_j^a-},\bar{N}_{T_j^a-}^a)}
     {\lambda_*^a(T_j^a\mid \bar{A}_{T_j^a-},\bar{N}_{T_j^a-}^a) \pi_{*,T_j^a} (A_{T_j^a}\mid \bar{A}_{T_j^a-},\bar{N}_{T_j^a-}^a)}.
\end{align}
}
Conditionally on $ \ubar{N}_{ t}^a= \ubar{n}_{*, t}^a$, the jumping times are fixed, i.e., 
{\small
\begin{align}
    \mathcal{T}_\tau^a\setminus\mathcal{T}_t^a =\{t_{*,j}^a\}_{j=1}^J.
\end{align}
}
Therefore, it follows that
{\small
\begin{align}
    &\prod_{T_j^a\in \mathcal{T}_\tau^a} \frac{\lambda_0^a(T_j^a\mid \bar{A}_{T_j^a-},\bar{N}_{T_j^a-}^a)  \pi_{0,T_j^a} (A_{T_j^a}\mid \bar{A}_{T_j^a-},\bar{N}_{T_j^a-}^a)}
     {\lambda_*^a(T_j^a\mid \bar{A}_{T_j^a-},\bar{N}_{T_j^a-}^a) \pi_{*,T_j^a} (A_{T_j^a}\mid \bar{A}_{T_j^a-},\bar{N}_{T_j^a-}^a)}
     \;\Big|\; \Big( (\ubar{A}_{ t}, \ubar{N}_{ t}^a)=(\ubar{a}_{*, t}, \ubar{n}_{*, t}^a),  (\bar{A}_{t-}, \bar{N}_{t-}^a)=(\bar{a}_{t-}, \bar{n}_{t-}^a)\Big) \\
     &= \prod_{j=1}^J \frac{\lambda_0^a(t_{*,j}^a\mid \bar{A}_{t_{*,j}^a-},\bar{N}_{t_{*,j}^a-}^a)  \pi_{0,t_{*,j}^a} (a_{*,t_{*,j}^a}\mid \bar{A}_{t_{*,j}^a-},\bar{N}_{t_{*,j}^a-}^a)}
     {\lambda_*^a(t_{*,j}^a\mid \bar{A}_{t_{*,j}^a-},\bar{N}_{t_{*,j}^a-}^a) \pi_{*,t_{*,j}^a} (a_{*,t_{*,j}^a}\mid \bar{A}_{t_{*,j}^a-},\bar{N}_{t_{*,j}^a-}^a)}
     \;\Big|\; \Big( (\ubar{A}_{ t}, \ubar{N}_{ t}^a)=(\ubar{a}_{*, t}, \ubar{n}_{*, t}^a),  (\bar{A}_{t-}, \bar{N}_{t-}^a)=(\bar{a}_{t-}, \bar{n}_{t-}^a)\Big)\\
     &= \prod_{j=1}^J{\lambda_0^a(t_{*,j}^a\mid \bar{A}_{t_{*,j}^a-},\bar{N}_{t_{*,j}^a-}^a) \pi_{0,t_{*,j}^a} (a_{*,t_{*,j}^a}\mid \bar{A}_{t_{*,j}^a-},\bar{N}_{t_{*,j}^a-}^a )}
     \;\Big|\; \Big( (\ubar{A}_{ t}, \ubar{N}_{ t}^a)=(\ubar{a}_{*, t}, \ubar{n}_{*, t}^a),  (\bar{A}_{t-}, \bar{N}_{t-}^a)=(\bar{a}_{t-}, \bar{n}_{t-}^a)\Big).\label{eq:first_part2}
\end{align}
}

For the second part, we also follow the steps as in Proposition~\ref{prop:unstable_weights}, that is,
{\small
\begin{align}
     &\pint\limits_{s\geq t} \left( \frac{1-\diff\Lambda_0^a(s\mid \bar{A}_{s-},\bar{N}_{s-}^a)}{1-\diff\Lambda_*^a(s\mid \bar{A}_{s-},\bar{N}_{s-}^a)}  \right)^{1-N^a( \diff s)}\\
    =& \pint\limits_{s\geq t}\left( \frac{1-\mathbb{P}_0(N^a(\diff s)=1 \mid \bar{A}_{s-},\bar{N}_{s-})}{1-\mathbb{P}_*(N^a(\diff s)=1 \mid \bar{A}_{s-},\bar{N}_{s-})}  \right)^{1-N^a( \diff s)}\\
    =& \lim_{K\to \infty} \prod\limits_{k=1}^K \left( \frac{\mathbb{P}_0(N^a([t_k, t_{k+1})) =0 \mid \bar{A}_{t_k-},\bar{N}_{t_k-})}{\mathbb{P}_*(N^a([t_k, t_{k+1}))=0 \mid \bar{A}_{t_k-},\bar{N}_{t_k-})}  \right)^{1-N^a([t_k, t_{k+1}))}\\
    =& \exp \Big[ -\int_{s\geq t} \lambda_0^a(s\mid \bar{A}_{s-},\bar{N}_{s-}^a) \diff s \Big] 
    / \exp \Big[ -\int_{s\geq t} \lambda_*^a(s\mid \bar{A}_{s-}, \bar{N}_{s-}^a) \diff s \Big] \\
    =& \exp \Big[ \int_{s\geq t} \lambda_*^a(s\mid \bar{A}_{s-},\bar{N}_{s-}^a) \diff s - \int_{s\geq t} \lambda_0^a(s\mid \bar{A}_{s-},\bar{N}_{s-}^a) \diff s \Big],
\end{align}
}
where $\bigcup_{k=1}^K [t_k,t_{k+1})$ is a disjoint partition of $[t,\tau]$ with  ${\lim_{K\to \infty} \max_{k\leq K} (t_{k+1}-t_k ) = 0}$. Finally, again by conditioning on $\ubar{N}_t^a=\ubar{n}_{*,t}^a$, we have that 
{\small
\begin{align}
    \int_{s\geq t} \lambda_*^a(s\mid \bar{A}_{s-},\bar{N}_{s-}) \diff s \;\Big|\; \Big(\ubar{N}_t^a=\ubar{n}_{*,t}^a\Big) = \int_{s\geq t} \mathbbm{1}_{\{t_{*,j}^a\}_{j=1}^J}(s) \diff s \;\Big|\; \Big(\ubar{N}_t^a=\ubar{n}_{*,t}^a\Big) = 0, 
\end{align}
}
which leaves
{\small
\begin{align}
     \pint\limits_{s\geq t} \left( \frac{1-\diff\Lambda_0^a(s\mid \bar{A}_{s-},\bar{N}_{s-})}{1-\diff\Lambda_*^a(s\mid \bar{A}_{s-},\bar{N}_{s-})}  \right)^{1-N^a( \diff s)}
     \;\Big|\; \Big( (\ubar{A}_{ t}, \ubar{N}_{ t}^a)=(\ubar{a}_{*, t}, \ubar{n}_{*, t}^a),  (\bar{A}_{t-}, \bar{N}_{t-}^a)=(\bar{a}_{t-}, \bar{n}_{t-}^a)\Big)\\
     =\exp \Big[- \int_{s\geq t} \lambda_0^a(s\mid \bar{A}_{s-},\bar{N}_{s-}) \diff s \Big]
     \;\Big|\; \Big( (\ubar{A}_{ t}, \ubar{N}_{ t}^a)=(\ubar{a}_{*, t}, \ubar{n}_{*, t}^a),  (\bar{A}_{t-}, \bar{N}_{t-}^a)=(\bar{a}_{t-}, \bar{n}_{t-}^a)\Big).\label{eq:second_part2}
\end{align}
}

Finally, we combine \eqref{eq:first_part2} with \eqref{eq:second_part2}. Hence, the scaling factors satisfy
{\small
\begin{align}
    &\pint\limits_{s\geq t} \Xi_s \;\Big|\; \Big( (\ubar{A}_{ t}, \ubar{N}_{ t}^a)=(\ubar{a}_{*, t}, \ubar{n}_{*, t}^a),  (\bar{A}_{t-}, \bar{N}_{t-}^a)=(\bar{a}_{t-}, \bar{n}_{t-}^a)\Big)\\
     =&\pint\limits_{s\geq t} \frac{\diff G_{0,s}(\bar{A}_{s},\bar{N}_s^a)}{\diff G_{*,s}(\bar{A}_{s},\bar{N}_s^a)} \;\Big|\; \Big( (\ubar{A}_{ t}, \ubar{N}_{ t}^a)=(\ubar{a}_{*, t}, \ubar{n}_{*, t}^a),  (\bar{A}_{t-}, \bar{N}_{t-}^a)=(\bar{a}_{t-}, \bar{n}_{t-}^a)\Big) \\
     =& \exp \Big[- \int_{s\geq t} \lambda_0^a(s\mid \bar{A}_{s-},\bar{N}_{s-}^a) \diff s  \Big] \nonumber\\ &\times\prod_{j=1}^J 
     \lambda_0^a(t_{*,j}^a\mid \bar{A}_{t_{*,j}^a-},\bar{N}_{t_{*,j}^a-}^a) \pi_{0,t_{*,j}^a} (a_{*,t_{*,j}^a}\mid \bar{A}_{t_{*,j}^a-},\bar{N}_{t_{*,j}^a-}^a ) \;\Big|\; \Big( (\ubar{A}_{ t}, \ubar{N}_{ t}^a)=(\ubar{a}_{*, t}, \ubar{n}_{*, t}^a),  (\bar{A}_{t-}, \bar{N}_{t-}^a)=(\bar{a}_{t-}, \bar{n}_{t-}^a)\Big)\\
     =& \prod_{j=1}^J \frac{\lambda_0^a(t_{*,j}^a\mid \bar{A}_{t_{*,j}^a-},\bar{N}_{t_{*,j}^a-}^a) \pi_{0,t_{*,j}^a} (a_{*,t_{*,j}^a}\mid \bar{A}_{t_{*,j}^a-},\bar{N}_{t_{*,j}-}^a )}
     {\exp  \int_{s \in [t_{*,j-1}^a,t_{*,j}^a)} \lambda_0^a(s\mid \bar{A}_{s-},\bar{N}_{s-}^a) \diff s  }
     \;\Big|\; \Big( (\ubar{A}_{ t}, \ubar{N}_{ t}^a)=(\ubar{a}_{*, t}, \ubar{n}_{*, t}^a),  (\bar{A}_{t-}, \bar{N}_{t-}^a)=(\bar{a}_{t-}, \bar{n}_{t-}^a)\Big)\\
     =& \prod_{j=1}^J \Xi_{t_{*,j}^a} \;\Big|\; \Big( (\ubar{A}_{ t}, \ubar{N}_{ t}^a)=(\ubar{a}_{*, t}, \ubar{n}_{*, t}^a),  (\bar{A}_{t-}, \bar{N}_{t-}^a)=(\bar{a}_{t-}, \bar{n}_{t-}^a)\Big).
\end{align}
}

\end{proof}

\clearpage
\section{Data generation}\label{appendix:data_gen}

\subsection{Tumor growth data}\label{appendix:data_gen_synth}
The tumor data used in Sec.~\ref{sec:experiments} was simulated based on the lung cancer model proposed by \citet{Geng.2017}, which has been previously used in several works \citep{Lim.2018, Bica.2020c, Li.2021, Melnychuk.2022, Seedat.2022, Vanderschueren.2023}.

Specifically, we adopt the simulation framework introduced by \citet{Vanderschueren.2023}, which includes irregularly spaced observations. However, different to their work, we explicitly add confounding bias to the treatment assignment (that is, both treatment times and treatment choice).

The tumor volume is the outcome variable. It evolves over time according to the ordinary differential equation
\begin{align}
    {\diff Y_t} =  \left( 1+ \underbrace{\rho \log \left( \frac{K}{Y_t} \right)}_{\text{Tumor growth}} - \underbrace{\alpha_c c_t}_{\text{Chemotherapy}} - \underbrace{(\alpha_r d_t + \beta_r d_t^2)}_{\text{Radiotherapy}} + \underbrace{\epsilon_t}_{\text{Noise}} \right) Y_t \diff t,
\end{align}
where $\rho$ is the tumor growth rate, $K$ represents the carrying capacity, and $\alpha_c$, $\alpha_r$, and $\beta_r$ control the effects of chemotherapy and radiotherapy, respectively. The term $\epsilon_t$ introduces randomness into the dynamics. The parameters were drawn following the distributions as in \citet{Geng.2017}, with details provided in Table~\ref{tab:tumor_sim}. The variables $c_t$ and $d_t$ represent chemotherapy and radiotherapy treatments, respectively, and follow previous works \citep{Lim.2018, Bica.2020c, Seedat.2022}. Time $t$ is measured in days.

\begin{table}[h!]
    \centering
    \footnotesize
        \begin{tabular}{llcll}

        \toprule
        \multicolumn{1}{c}{} & Variable &  Parameter & Distribution & Value $(\mu, \sigma^2)$ \\
        \midrule
        \multirow{ 2}{*}{Tumor growth}
        &  Growth parameter & $\rho$ & Normal & $(7.00\times 10^{-5}, 7.23\times 10^{-3})$ \\
         &  Carrying capacity & $K$ & Constant & 30 \\
        \midrule
        \multirow{ 2}{*}{Radiotherapy}
         &  Radio cell kill & $\alpha_r$ &   Normal & $(0.0398, 0.168)$ \\
          &  Radio cell kill & $\beta_r$ &  -- & Set to $\beta_r = 10\times\alpha_r$ \\
        \midrule
        Chemotherapy & Chemo cell kill & $\alpha_c$ & Normal  & $(0.028, 7.00\times 10^{-4})$\\
         \midrule
         Noise & -- & $\epsilon_t$ & Normal  & $(0, 0.01^2)$\\
        \bottomrule
        \end{tabular}
    \caption{Parameter details for the synthetic data generating process.}
    \label{tab:tumor_sim}
\end{table}
The radiation dosage $d_t$ and chemotherapy drug concentration $c_t$ are applied with probabilities
The treatments are administered according to the following, history dependent treatment probabilities
\begin{align}\label{eq:confounding_cancer}
    A_t^c, A_t^r\sim \text{Ber}\left(\sigma\left(\frac{\gamma}{D_{\text{max}}}(\bar{D}_{15}(\bar{Y}_{t-1}-\bar{D}_{\text{max}}/2\right)\right),
\end{align}
where $\gamma$ controls the \textbf{confounding strength}, $D_{\text{max}}$ is the maximum tumor volume, $\bar{D}_{15}$ the average tumor diameter of the last $15$ time steps, and $\gamma$ controls the confounding strength. For test data, we want to evaluate the potential outcomes under hard interventions. Hence, uniformly sample a random treatment sequence and apply it to the outcome, irrespective of the history, as is done in \citep{Melnychuk.2022}.

Importantly, this treatment assignment process is \emph{only used for training and validation}. For testing, we randomly sample \emph{hard interventions} as described in Sec.~\ref{sec:problem}. Thereby, we can directly investigate how all baselines perform under time-varying confounding.

We add an observation process that randomly masks away observations of the outcome variable. Hence, at some days, the tumor diameter remains \emph{unobserved}. Irregular observations times are the domain that continuous time methods are tailored for.

For this, our setup is consistent with \citet{Vanderschueren.2023}, who define the observation process as a Hawkes process with intensity $\lambda_0^y(t)$ given by
\begin{align}
\lambda_0^y(t) = \text{sigmoid}\left[\omega\left(\frac{\bar{D}_{t}}{D_\text{ref}} - \frac{1}{2}\right)\right],
\end{align}
where $\omega$ determines the informativeness of the sampling, $D_\text{ref} = 13$ cm represents the reference tumor diameter, and $\bar{D}_{t}$ is the average tumor diameter over the past $15$ days. In this work, however, our main focus is \textbf{not} informative sampling, which is an orthogonal research direction. Therefore, we opted for setting the informativeness parameter to $\omega=0$. Thereby, we are in the setting that is known as \emph{sampling completely a random}.

Following \citet{Kidger.2020}, we added a multivariate counting variable that counts the number of observations up to each day, respectively. We normalized this counting variable with the maximum time scale $\tau$.

Finally, consistent with \citep{Lim.2018, Bica.2020c, Seedat.2022, Vanderschueren.2023}, we introduced patient heterogeneity by modeling distinct subgroups. Each subgroup differs in their average treatment response, characterized by the mean of the normal distributions. Specifically, for subgroup A, we increased the mean of $\alpha_r$ by $10\%$, and for subgroup B, we increased the mean of $\alpha_c$ by $10\%$.

The observed time window for training, validation, and testing is set to $\tau=30$ days. We generate $1000$ observations for training, validation, and testing, respectively.

\clearpage

\subsection{Mimic-III data}\label{appendix:data_gen_semisynth}
For our semi-synthetic experiments in Sec.~\ref{sec:experiments}, we upon the MIMIC-extract dataset \citep{Wang.2020}, which is based on the MIMIC-III database \citep{Johnson.2016} and widely used in research \citep[e.g.,][]{Ozyurt.2021}. Importantly, measurements in this dataset have irregular timestamps for different covariates. Therefore, we can directly use this missingness without artificially introduces any masking process for covariates.

In our setup, we use 9 time-varying covariates (i.e., vital signs) alongside the static covariates gender, ethnicity, and age. As we are interested in conditional average potential outcomes, we need to introduce a synthetic data outcome generation process. For this, we simulate a two-dimensional outcome variable for training and validation purposes and generate interventional outcomes for testing. As defined in Sec.~\ref{sec:problem}, we add past observed outcomes to the list of covariates. Hence, have have a $d_x=14$-dimensional covariate space. The outcome-generation process follows \citep{Melnychuk.2022}:

    \emph{Simulating untreated outcomes:} We first simulate two untreated outcomes $\tilde{Y}_t^j$ for $j=1, 2$ as follows: \begin{align} \tilde{Y}_t^j = \alpha_s^j \text{B-spline}(t) + \alpha_g^j g^j(t) + \alpha_f^j f_Y^j(X_t) + \epsilon_t, \end{align} where $\alpha_s^j$, $\alpha_g^j$, and $\alpha_f^j$ are weight parameters. Here, B-spline$(t)$ is drawn from a mixture of three cubic splines, and $f_Y^j(\cdot)$ is a random function approximated using random Fourier features from a Gaussian process.

    \emph{Simulating treatment assignments:} We simulate $d_a=3$ synthetic treatments $A_t^l$ for $l=1,2,3$ according to: \begin{align} A_t^l \sim \text{Ber}(p_t^l), \quad p_t^l = \sigma\left(\gamma_Y^l Y_{t-1}^{A,l} + \gamma_X^l f_Y^l(X_t) + b^l\right), \end{align} where $\gamma_Y^l$ and $\gamma_X^l$ are parameters that control the influence of past treatments and covariates on treatment assignment. $Y_t^{A,l}$ represents a summary of previously treated outcomes, $b^l$ is a bias term, and $f_Y^l(\cdot)$ is another random function sampled using a random Fourier features approximation of a Gaussian process. For test data, we want to evaluate the potential outcomes under hard interventions. Hence, uniformly sample a random treatment sequence and apply it to the outcome, irrespective of the history, as is done in \citep{Melnychuk.2022}.

    \emph{Applying treatments to outcomes:} Finally, for training and validation, treatments are applied to the untreated outcomes $\tilde{Y}t^j$ using: \begin{align} Y_t^j = \tilde{Y}t^j + \sum_{i=t-\omega^l}^t \frac{\min_{l=1,\ldots,d_a} \mathbbm{1}_{{A_i^l=1}} p_i^l \beta^{l,j}}{(\omega^l-i)^2}, \end{align} where $\omega^l$ defines the duration of the treatment effect window, and $\beta^{l,j}$ determines the maximum effect of treatment $A^l$ on outcome $Y_t^j$. Importantly, we do not follow this treatment assignment mechanism for testing. Instead, as we are interested in estimating conditional average potential outcomes, we randomly assign \emph{hard interventions} as in Sec.~\ref{sec:problem}.

    \emph{Masking the outcome:} Finally, we add an observation mask to the outcome variable $Y_t^j$. For this, we randomly mask away the outcome variable with observation probability $p=0.15$

In our experiments in Sec.~\ref{sec:experiments}, we used $1000$ samples for training, validation and testing, respectively. For testing, we simulate $50$ different intervention sequences per patient. The time window was set between $30 \leq T \leq 50$.

\clearpage

\section{\rebuttal{Additional results}}

\subsection{\rebuttal{Variation of observation intensities}}

\rebuttal{In the following, we evaluate the stability of our \method for different sampling intensities. For this, we use the tumor growth model as in Section~\ref{sec:experiments}. In our main study, observation times follow a history-dependent intensity process as informed by prior literature \citep{Vanderschueren.2023}. Here, the observation probabilities $\lambda_0^y(t)$ are given by
\begin{align}
\lambda_0^y(t) = \text{sigmoid}\left[\omega\left(\frac{\bar{D}_{t}}{D_\text{ref}} - \frac{1}{2}\right)\right],
\end{align}
where $\omega$ determines the informativeness of the sampling, $D_\text{ref} = 13$ cm represents the reference tumor diameter, and $\bar{D}_{t}$ is the average tumor diameter over the past $15$ days. More details are provided in Supplement~\ref{appendix:data_gen_synth}.}

\rebuttal{In our main study in Section~\ref{sec:experiments}, we focused on the sampling completely at random setting, where we set the informativeness parameter $\omega=0$. In the following, we increase the informativeness parameter up to $\omega=0.5$. Tables~\ref{tab:results_irregular1}~and~\ref{tab:results_irregular2} show the performance of our \method against the baselines. We find that our \method has very robust performance and consistently outperforms the baselines.}

\begin{table}[h!]
    \centering
    \begin{adjustbox}{max width=\textwidth}
    \begin{tabular}{cc|cccccc}
        \toprule
        \textbf{Informativeness of} & \textbf{Prediction window} & \textbf{G-Net} & \textbf{CT} & \textbf{RMSNs} & \textbf{CRN} & \textbf{TE-CDE} & \textbf{\method} \\
        \textbf{observation times} & \textbf{in days} &  \citep{Li.2021} & \citep{Melnychuk.2022} & \citep{Lim.2018} & \citep{Bica.2020c} & \citep{Seedat.2022} & (ours) \\
        \midrule
        \multirow{3}{*}{$\omega=0.0$} & 1 & $5.69\pm1.09$ & $6.45\pm0.95$ & $7.01\pm1.29$ & $8.31\pm1.23$ & $7.86\pm0.80$ & $\bm{4.33\pm0.89}$ \\
                                    & 2 & $5.10\pm1.64$ & $7.30\pm2.18$ & $5.87\pm1.45$ & $7.32\pm1.59$ & $7.21\pm0.88$ & $\bm{4.24\pm0.44}$\\
                                    & 3 & $4.51\pm1.20$ & $6.19\pm1.72$ & $4.86\pm1.36$ & $5.97\pm1.21$ & $6.10\pm0.77$ & $\bm{4.15\pm1.20}$ \\
        \midrule
        \multirow{3}{*}{$\omega=0.1$} & 1 & $6.65\pm0.86$ & $6.64\pm0.82$ & $7.88\pm1.12$ & $8.15\pm0.68$ & $7.95\pm1.90$ & $\bm{5.63\pm2.15}$ \\
                                    & 2 & $6.19\pm1.44$ & $6.09\pm1.43$ & $6.37\pm1.19$ & $7.73\pm1.66$ & $7.69\pm1.56$ & $\bm{4.57\pm0.82}$\\
                                    & 3 & $5.26\pm1.20$ & $5.24\pm1.09$ & $5.16\pm0.97$ & $6.31\pm1.29$ & $6.57\pm1.29$ & $\bm{4.18\pm0.53}$\\
        \midrule
        \multirow{3}{*}{$\omega=0.2$} & 1 & $6.57\pm0.87$ & $6.33\pm0.51$ & $7.21\pm1.16$ & $8.56\pm1.70$ & $7.83\pm0.49$ & $\bm{5.47\pm1.72}$ \\
                                    & 2 & $6.14\pm1.49$ & $5.86\pm1.26$ & $6.67\pm1.41$ & $7.91\pm1.37$ & $7.65\pm1.56$ & $\bm{4.69\pm0.41}$\\
                                    & 3 & $5.22\pm1.23$ & $5.05\pm0.99$ & $5.43\pm1.11$ & $6.45\pm1.11$ & $6.48\pm1.24$ & $\bm{4.30\pm0.42}$\\
        \midrule
        \multirow{3}{*}{$\omega=0.3$} & 1 & $6.40\pm1.05$ & $6.26\pm0.73$ & $7.01\pm1.37$ & $7.54\pm0.59$ & $8.65\pm1.31$ & $\bm{5.48\pm2.12}$ \\
                                    & 2 & $6.05\pm1.64$ & $5.71\pm1.31$ & $6.69\pm1.72$ & $7.76\pm1.29$ & $7.61\pm1.66$ & $\bm{4.74\pm0.27}$\\
                                    & 3 & $5.13\pm1.33$ & $4.91\pm0.99$ & $5.39\pm1.34$ & $6.32\pm1.07$ & $6.49\pm1.44$ & $\bm{4.25\pm0.45}$\\
        \midrule
        \multirow{3}{*}{$\omega=0.4$} & 1 & $6.35\pm0.39$ & $6.27\pm0.81$ & $6.86\pm0.77$ & $8.51\pm 0.75$ & $7.66\pm0.98$ & $\bm{5.97\pm1.76}$ \\
                                    & 2 & $6.02\pm1.20$ & $5.79\pm1.37$ & $6.57\pm1.53$ & $8.12\pm1.66$ & $7.48\pm1.92$ & $\bm{4.73\pm0.63}$\\
                                    & 3 & $5.12\pm0.97$ & $4.98\pm1.04$ & $5.38\pm1.22$ & $6.51\pm1.31$ & $6.34\pm1.62$ & $\bm{4.28\pm0.66}$\\
        \midrule
        \multirow{3}{*}{$\omega=0.5$} & 1 & $\bm{5.97\pm0.39}$ & $5.98\pm0.83$ & $7.07\pm0.45$ & $9.29\pm1.03$ & $8.56\pm1.24$ & $6.39\pm1.52$ \\
                                    & 2 & $5.64\pm1.10$ & $5.51\pm1.37$ & $6.57\pm1.34$ & $7.96\pm1.61$ & $7.68\pm1.98$ & $\bm{4.77\pm0.89}$\\
                                    & 3 & $4.82\pm0.91$ & $4.81\pm1.07$ & $5.38\pm1.09$ & $6.50\pm1.29$ & $6.46\pm1.46$ & $\bm{4.29\pm0.69}$\\
        \bottomrule
    \end{tabular}
    \end{adjustbox}

    \caption{\rebuttal{\textbf{Informative sampling:} Performance for the tumor growth model with irregular sampling times and confounding strength $\bm{\gamma=8}$. We vary the informative sampling parameter $\omega$. Our \method has robust performance and consistently outperforms the baselines.}}

    \label{tab:results_irregular1}
\end{table}

\begin{table}[h!]
    \centering
    \begin{adjustbox}{max width=\textwidth}
    \begin{tabular}{cc|cccccc}
        \toprule
        \textbf{Informativeness of} & \textbf{Prediction window} & \textbf{G-Net} & \textbf{CT} & \textbf{RMSNs} & \textbf{CRN} & \textbf{TE-CDE} & \textbf{\method} \\
        \textbf{observation times} & \textbf{in days} &  \citep{Li.2021} & \citep{Melnychuk.2022} & \citep{Lim.2018} & \citep{Bica.2020c} & \citep{Seedat.2022} & (ours) \\
        \midrule
        \multirow{3}{*}{$\omega=0.0$} & 1 & $8.74\pm0.49$ & $8.85\pm1.39$ & $9.52\pm0.98$ & $12.15\pm0.71$ & $10.02\pm1.22$ & $\bm{5.13\pm0.44}$ \\
                                    & 2 & $6.48\pm1.23$ & $8.41\pm0.75$ & $6.98\pm1.16$ & $8.35\pm0.91$ & $8.09\pm0.99$ & $\bm{4.91\pm0.31}$ \\
                                    & 3 & $5.39\pm0.98$ & $7.05\pm0.58$ & $5.61\pm0.93$ & $6.74\pm0.80$ & $6.73\pm0.95$ & $\bm{4.47\pm0.39}$ \\
        \midrule
        \multirow{3}{*}{$\omega=0.1$} & 1 & $9.16\pm0.32$ & $8.71\pm0.73$ & $10.08\pm0.60$ & $11.11\pm1.06$ & $11.15\pm0.65$ & $\bm{4.99\pm0.74}$ \\
                                    & 2 & $7.45\pm1.08$ & $6.77\pm0.64$ & $7.35\pm1.04$ & $8.91\pm1.11$ & $8.42\pm1.25$ & $\bm{4.84\pm0.81}$ \\
                                    & 3 & $6.17\pm0.95$ & $5.75\pm0.71$ & $5.91\pm0.85$ & $7.14\pm0.89$ & $6.95\pm1.09$ & $\bm{4.53\pm0.90}$ \\
        \midrule
        \multirow{3}{*}{$\omega=0.2$} & 1 & $8.99\pm0.32$ & $8.84\pm0.99$ & $9.34\pm0.83$ & $11.30\pm0.98$ & $10.92\pm0.11$ & $\bm{5.37\pm1.47}$ \\
                                    & 2 & $7.32\pm1.11$ & $6.99\pm0.32$ & $7.69\pm1.03$ & $8.99\pm1.02$ & $8.53\pm0.95$ & $\bm{5.03\pm0.50}$ \\
                                    & 3 & $6.08\pm0.98$ & $5.89\pm0.45$ & $6.18\pm0.87$ & $7.20\pm0.85$ & $6.99\pm0.84$ & $\bm{4.59\pm0.47}$ \\
        \midrule
        \multirow{3}{*}{$\omega=0.3$} & 1 & $8.81\pm0.43$ & $8.65\pm0.99$ & $9.42\pm0.72$ & $11.28\pm0.90$ & $10.91\pm0.71$ & $\bm{6.33\pm2.31}$ \\
                                    & 2 & $7.25\pm1.14$ & $6.84\pm0.41$ & $7.89\pm1.05$ & $9.00\pm0.99$ & $8.74\pm0.83$ & $\bm{6.04\pm2.14}$ \\
                                    & 3 & $6.02\pm0.99$ & $\bm{5.80\pm0.57}$ & $6.33\pm0.92$ & $7.19\pm0.79$ & $7.23\pm0.74$ & $5.98\pm2.10$ \\
        \midrule
        \multirow{3}{*}{$\omega=0.4$} & 1 & $8.59\pm0.32$ & $8.42\pm1.60$ & $8.74\pm1.02$ & $11.46\pm0.39$ & $11.30\pm1.04$ & $\bm{4.50\pm0.38}$ \\
                                    & 2 & $7.08\pm1.03$ & $6.63\pm0.62$ & $7.08\pm0.69$ & $8.84\pm0.97$ & $8.47\pm1.19$ & $\bm{4.79\pm0.71}$ \\
                                    & 3 & $5.86\pm0.88$ & $5.65\pm0.42$ & $5.73\pm0.64$ & $7.10\pm0.78$ & $6.91\pm0.82$ & $\bm{4.62\pm0.88}$ \\
        \midrule
        \multirow{3}{*}{$\omega=0.5$} & 1 & $8.11\pm0.42$ & $8.19\pm1.43$ & $8.65\pm1.25$ & $11.96\pm0.37$ & $11.38\pm0.64$ & $\bm{4.80\pm0.72}$ \\
                                    & 2 & $6.73\pm1.11$ & $6.51\pm0.77$ & $7.29\pm0.49$ & $9.02\pm1.00$ & $8.60\pm1.10$ & $\bm{4.94\pm0.59}$ \\
                                    & 3 & $5.60\pm0.95$ & $5.57\pm0.55$ & $5.89\pm0.51$ & $7.22\pm0.83$ & $7.07\pm0.92$ & $\bm{4.72\pm0.73}$ \\
        \bottomrule
    \end{tabular}
    \end{adjustbox}

    \caption{\rebuttal{\textbf{Informative sampling:} Performance for the tumor growth model with irregular sampling times and confounding strength $\bm{\gamma=6}$. As in Table~\ref{tab:results_irregular1}, we vary the informative sampling parameter $\omega$. Our \method again demonstrates superior performance and outperforms all baselines.}}

    \label{tab:results_irregular2}
\end{table}

\clearpage

\subsection{\rebuttal{Ablation study: SDIP-Net for discrete time}}

\begin{table}[h!]
    \centering
    \begin{adjustbox}{max width=\textwidth}
    \begin{tabular}{cc|ccccc}
        \toprule
        \textbf{Confounding strength} & \textbf{Prediction window in days} & \textbf{G-Net} \citep{Li.2021} & \textbf{CT} \citep{Melnychuk.2022} & \textbf{RMSNs} \citep{Lim.2018} & \textbf{CRN} \citep{Bica.2020c} & \textbf{SDIP-Net}~(ours) \\
        \midrule
        \multirow{3}{*}{$\gamma=4$} & 1 & $1.86\pm0.15$  & $2.44\pm0.24$ & $1.73\pm0.17$ & $2.44\pm0.22$ & $\bm{1.53\pm 0.12}$ \\
                                   & 2 & $2.17\pm0.51$ & $\bm{2.11\pm0.52}$ & $2.62\pm 0.65$ & $2.78\pm0.68$ & $3.40\pm0.80$ \\
                                   & 3 & $2.22\pm0.49$ & $\bm{1.97\pm0.47}$ & $2.32\pm 0.57$ & $2.55\pm0.64$ & $2.74\pm0.68$ \\
        \midrule
        \multirow{3}{*}{$\gamma=5$} & 1 & $2.27\pm0.35$ & $2.88\pm0.31$ & $2.10\pm0.26$ & $3.37\pm0.54$ & $\bm{1.86\pm0.18}$ \\
                                   & 2 & $\bm{2.90\pm0.39}$ & $2.93\pm0.44$ & $3.74 \pm 0.41$ & $4.03\pm 0.33$ & $4.78\pm0.36$ \\
                                   & 3 & $2.91\pm0.37$ & $\bm{2.79\pm0.47}$ & $3.30\pm0.36$ & $3.68\pm0.35$ & $3.87\pm0.32$ \\
        \midrule
        \multirow{3}{*}{$\gamma=6$} & 1 & $2.59\pm0.40$ & $3.37\pm0.46$ & $2.60\pm0.42$ & $4.14\pm0.62$ & $\bm{1.93\pm0.28}$ \\
                                   & 2 & $\bm{3.31\pm 0.41}$ & $3.36\pm 0.58$ & $4.43\pm0.66$ & $4.97\pm 0.89$ & $5.58\pm1.14$ \\
                                   & 3 & $3.31\pm0.41$ & $\bm{3.15\pm0.58}$ & $3.87\pm0.59$ & $4.39\pm0.76$ & $4.48\pm0.84$ \\
        \midrule
        \multirow{3}{*}{$\gamma=7$} & 1 & $3.06\pm0.63$ & $4.13\pm0.43$ & $3.14\pm0.52$ & $5.04\pm 0.58$ & $\bm{2.23\pm0.49}$ \\
                                   & 2 & $\bm{3.61\pm 0.54}$ & $3.89\pm 0.91$ & $5.11\pm 1.07$ & $5.48\pm1.18$ & $5.96\pm1.39$ \\
                                   & 3 & $\bm{3.51\pm0.57}$ & $3.75\pm1.17$ & $4.31\pm0.93$ & $4.79\pm1.08$ & $4.70\pm1.06$ \\
        \midrule
        \multirow{3}{*}{$\gamma=8$} & 1 & $3.37\pm0.77$ & $4.39\pm0.62$ & $3.61\pm 0.65$ & $5.80\pm0.87$ & $\bm{2.42\pm0.53}$ \\
                                   & 2 & $\bm{3.74\pm 0.58}$ & $3.86 \pm 0.63$ & $5.56\pm 0.77$ & $5.95\pm0.99$ & $6.14\pm0.94$ \\
                                   & 3 & $\bm{3.68\pm0.46}$ & $3.79\pm0.71$ & $4.64\pm0.61$ & $5.15\pm0.72$ & $4.92\pm0.78$ \\
        \midrule
        \multirow{3}{*}{$\gamma=9$} & 1 & $3.52\pm0.71$ & $4.72\pm0.57$ & $3.88\pm0.69$ & $6.34\pm0.81$ & $\bm{2.58\pm 0.64}$ \\
                                   & 2 & $\bm{4.41\pm1.13}$ & $4.78\pm 1.16$ & $7.00\pm1.50$ & $6.87\pm1.46$ & $7.17\pm1.47$ \\
                                   & 3 & $\bm{4.28\pm1.07}$ & $4.80\pm1.48$ & $5.70\pm1.22$ & $5.90\pm1.30$ & $5.72\pm1.21$ \\
        \midrule
        \multirow{3}{*}{$\gamma=10$} & 1 & $3.81\pm0.86$ & $5.23\pm0.58$ & $4.23\pm1.00$ & $6.61\pm1.02$ & $\bm{2.74\pm0.67}$ \\
                                   & 2 & $\bm{4.72\pm0.53}$ & $5.07\pm 1.09$ & $8.32\pm1.22$ & $7.47\pm1.10$ & $7.52\pm1.16$ \\
                                   & 3 & $\bm{3.54\pm0.42}$ & $4.69\pm0.86$ & $6.35\pm0.89$ & $6.25\pm0.82$ & $5.94\pm 0.79$ \\
        \bottomrule
    \end{tabular}
    \end{adjustbox}

    \caption{\rebuttal{\textbf{Ablation study:} Performance for the tumor growth model with regular sampling times. Our SDIP-Net ablation has comparable performance to the state-of-the-art baselines for estimating CAPOs in discrete time.}}

    \label{tab:results_regular}
\end{table}

\rebuttal{We perform an ablation study, where we use our stabilized inverse propensity weights in the discrete-time setting. For this, we use the tumor growth data as in Section~\ref{sec:experiments}. However, we do not apply an observation mask. Instead, all timestamps are observed.}

\rebuttal{Here, we use an LSTM \citep{Hochreiter.1997} as the neural backbone in order to demonstrate that our approach is also applicable to other neural backbones. Our \emph{stabilized discrete time inverse propensity network (SDIP-Net)} has comparable performance to the baselines. Table~\ref{tab:results_regular} shows the results, which thus confirm the effectiveness of our approach.} \rebuttal{Nevertheless, we emphasize that our approach is tailored for the continuous time setting, and \textbf{not} for the more unrealistic discrete-time setting.}

\clearpage

\section{Hyperparameter tuning}\label{appendix:hparams}
In order to ensure a fair comparison of all methods, we close follow hyperparameter tuning as in \citep{Melnychuk.2022} and \citep{Hess.2024b}. In particular, we performed a random grid search. Below, we report the tuning grid for each method. Importantly, all methods are only tuned on \emph{factual} data. For optimization, we use Adam \citep{Kingma.2015}. \rebuttal{Both TE-CDE \citep{Seedat.2022} and our \method used a simple Euler quadrature and linear interpolation for the Neural CDE control path \citep{Morrill.2021}. For the neural CDE of both methods and the integrated intensity in our \method, we did not tune the grid size of the solver. Further, we emphasize that different quadrature schemes may impact training time. Fortunately, we did not encounter large differences in performance between schemes of different orders and, hence, opted for Euler integration.}

\textbf{Runtime:}~All methods were trained on {$1\times$ NVIDIA A100-PCIE-40GB}. On average, training our \method took $29.4$ minutes on tumor growth data and approximately $1.7$ hours on MIMIC-III data, which is comparable to the baselines.

\begin{table*}[h!]
    \vspace{-0.15cm}
        \footnotesize
            \begin{adjustbox}{width=0.6\columnwidth,center}
                \begin{tabular}{l|l|l|l}
                    \toprule
                    Method & Component & Hyperparameter & Tuning range \\
                    \midrule
                    
                    \multirow{16}{*}{CRN \citep{Bica.2020c}} & \multirow{8}{*}{\begin{tabular}{l} Encoder \end{tabular}} 
                       & LSTM layers ($J$) & 1 \\
                       && Learning rate ($\eta$) & {0.01, 0.001, 0.0001}\\
                       && Minibatch size & {64, 128, 256} \\
                       && LSTM hidden units ($d_h$) & 0.5$d_{yxa}$, 1$d_{yxa}$, 2$d_{yxa}$, 3$d_{yxa}$, 4$d_{yxa}$ \\
                       && Balanced representation size ($d_z$) & 0.5$d_{yxa}$, 1$d_{yxa}$, 2$d_{yxa}$, 3$d_{yxa}$, 4$d_{yxa}$ \\
                       && FC hidden units ($n_{\text{FC}}$) & 0.5$d_z$, 1$d_z$, 2$d_z$, 3$d_z$, 4$d_z$ \\
                       && LSTM dropout rate ($p$) & {0.1, 0.2} \\
                       && Number of epochs ($n_e$) &  50  \\
                    \cmidrule{2-4}
                    & \multirow{8}{*}{\begin{tabular}{l} Decoder \end{tabular}} 
                       & LSTM layers ($J$) & 1 \\
                       && Learning rate ($\eta$) & {0.01, 0.001, 0.0001} \\
                       && Minibatch size & {256, 512, 1024} \\
                       && LSTM hidden units ($d_h$) & {Balanced representation size of encoder} \\
                       && Balanced representation size ($d_z$) & 0.5$d_{yxa}$, 1$d_{yxa}$, 2$d_{yxa}$, 3$d_{yxa}$, 4$d_{yxa}$ \\
                       && FC hidden units ($n_{\text{FF}}$) & 0.5$d_z$, 1$d_z$, 2$d_z$, 3$d_z$, 4$d_z$  \\
                       && LSTM dropout rate ($p$) & {0.1, 0.2} \\
                       && Number of epochs ($n_e$) &  50 \\
                    \midrule

                    \multirow{10}{*}{CT \citep{Melnychuk.2022}} & \multirow{10}{*}{\begin{tabular}{l} (end-to-end) \end{tabular}}
                        & Transformer blocks ($J$) & {1,2} \\
                        && Learning rate ($\eta$) & {0.01, 0.001, 0.0001} \\
                        && Minibatch size & 64, 128, 256 \\
                        && Attention heads ($n_h$) & 1 \\
                        && Transformer units ($d_h$) & 1$d_{yxa}$, 2$d_{yxa}$, 3$d_{yxa}$, 4$d_{yxa}$ \\
                        && Balanced representation size ($d_z$) & 0.5$d_{yxa}$, 1$d_{yxa}$, 2$d_{yxa}$, 3$d_{yxa}$, 4$d_{yxa}$ \\
                        && Feed-forward hidden units ($n_{\text{FF}}$) & 0.5$d_z$, 1$d_z$, 2$d_z$, 3$d_z$, 4$d_z$ \\
                        && Sequential dropout rate ($p$) & {0.1, 0.2} \\
                        && Max positional encoding ($l_{\text{max}}$) & 15 \\
                        && Number of epochs ($n_e$) & 50 \\
                    \midrule
                    
                    \multirow{21}{*}{RMSNs \citep{Lim.2018}} 
                    & \multirow{7}{*}{\begin{tabular}{l} Propensity \\ treatment \\ network \end{tabular}} 
                       & LSTM layers ($J$) & 1 \\
                       && Learning rate ($\eta$) & {0.01, 0.001, 0.0001} \\
                       && Minibatch size & {64, 128, 256} \\
                       && LSTM hidden units ($d_h$) & 0.5$d_{yxa}$, 1$d_{yxa}$, 2$d_{yxa}$, 3$d_{yxa}$, 4$d_{yxa}$\\
                       && LSTM dropout rate ($p$) &  {0.1, 0.2} \\
                       && Max gradient norm &  {0.5, 1.0, 2.0} \\
                       && Number of epochs ($n_e$) &  50 \\
                    \cmidrule{2-4}
                    & \multirow{7}{*}{\begin{tabular}{l} Propensity \\ history \\ network \\ \midrule  Encoder \end{tabular}} 
                       & LSTM layers ($J$) & 1 \\
                       && Learning rate ($\eta$) & {0.01, 0.001, 0.0001} \\
                       && Minibatch size & {64, 128, 256} \\
                       && LSTM hidden units ($d_h$) & 0.5$d_{yxa}$, 1$d_{yxa}$, 2$d_{yxa}$, 3$d_{yxa}$, 4$d_{yxa}$ \\
                       && LSTM dropout rate ($p$) & {0.1, 0.2} \\
                       && Max gradient norm & {0.5, 1.0, 2.0} \\
                       && Number of epochs ($n_e$) &  50 \\
                    \cmidrule{2-4}
                    & \multirow{7}{*}{\begin{tabular}{l} Decoder \end{tabular}} 
                       & LSTM layers ($J$) & 1 \\
                       && Learning rate ($\eta$) & {0.01, 0.001, 0.0001} \\
                       && Minibatch size & {256, 512, 1024} \\
                       && LSTM hidden units ($d_h$) & 1$d_{yxa}$, 2$d_{yxa}$, 4$d_{yxa}$, 8$d_{yxa}$, 16$d_{yxa}$\\
                       && LSTM dropout rate ($p$) & {0.1, 0.2} \\
                       && Max gradient norm & {0.5, 1.0, 2.0, 4.0} \\
                       && Number of epochs ($n_e$) & 50 \\
                    \midrule

                    \multirow{8}{*}{G-Net \citep{Li.2021}} & \multirow{8}{*}{\begin{tabular}{l} (end-to-end) \end{tabular}}
                        & LSTM layers ($J$) & 1 \\
                        && Learning rate ($\eta$) & {0.01, 0.001, 0.0001} \\
                        && Minibatch size & {64, 128, 256} \\
                        && LSTM hidden units ($d_h$) & 0.5$d_{yxa}$, 1$d_{yxa}$, 2$d_{yxa}$, 3$d_{yxa}$, 4$d_{yxa}$ \\
                        && LSTM output size ($d_z$) & 0.5$d_{yxa}$, 1$d_{yxa}$, 2$d_{yxa}$, 3$d_{yxa}$, 4$d_{yxa}$ \\
                        && Feed-forward hidden units ($n_{\text{FF}}$) & 0.5$d_z$, 1$d_z$, 2$d_z$, 3$d_z$, 4$d_z$ \\
                        && LSTM dropout rate ($p$) & {0.1, 0.2} \\
                        && Number of epochs ($n_e$) & 50 \\
                    \midrule
       
                    \multirow{16}{*}{TE-CDE \citep{Seedat.2022}} & \multirow{8}{*}{\begin{tabular}{l} Encoder \end{tabular}} 
                       & Neural CDE \citep{Kidger.2020} hidden layers ($J$) & 1 \\
                       && Learning rate ($\eta$) & {0.01, 0.001, 0.0001}\\
                       && Minibatch size & {64, 128, 256} \\
                       && Neural CDE hidden units ($d_h$) & 0.5$d_{yxa}$, 1$d_{yxa}$, 2$d_{yxa}$, 3$d_{yxa}$, 4$d_{yxa}$ \\
                       && Balanced representation size ($d_z$) & 0.5$d_{yxa}$, 1$d_{yxa}$, 2$d_{yxa}$, 3$d_{yxa}$, 4$d_{yxa}$ \\
                       && Feed-forward hidden units ($n_{\text{FF}}$) & 0.5$d_z$, 1$d_z$, 2$d_z$, 3$d_z$, 4$d_z$ \\
                       && Neural CDE dropout rate ($p$) & {0.1, 0.2} \\
                       && Number of epochs ($n_e$) &  50  \\
                    \cmidrule{2-4}
                    & \multirow{8}{*}{\begin{tabular}{l} Decoder \end{tabular}} 
                       & Neural CDE hidden layers ($J$) & 1 \\
                       && Learning rate ($\eta$) & {0.01, 0.001, 0.0001} \\
                       && Minibatch size & {256, 512, 1024} \\
                       && Neural CDE hidden units ($d_h$) & {Balanced representation size of encoder} \\
                       && Balanced representation size ($d_z$) & 0.5$d_{yxa}$, 1$d_{yxa}$, 2$d_{yxa}$, 3$d_{yxa}$, 4$d_{yxa}$ \\
                       && Feed-forward hidden units ($n_{\text{FF}}$) & 0.5$d_z$, 1$d_z$, 2$d_z$, 3$d_z$, 4$d_z$  \\
                       && Neural CDE dropout rate ($p$) & {0.1, 0.2} \\
                       && Number of epochs ($n_e$) &  50 \\
                    \midrule

                    \multirow{21}{*}{\method (ours)} 
                    & \multirow{7}{*}{\begin{tabular}{l} Weight \\ network \end{tabular}} 
                       & Neural CDE \citep{Kidger.2020} hidden layers ($J$) & 1 \\
                       && Learning rate ($\eta$) & {0.01, 0.001, 0.0001} \\
                       && Minibatch size & {64, 128, 256} \\
                       && Neural CDE hidden units ($d_h$) & 0.5$d_{yxa}$, 1$d_{yxa}$, 2$d_{yxa}$, 3$d_{yxa}$, 4$d_{yxa}$\\
                       && Neural CDE dropout rate ($p$) &  {0.1, 0.2} \\
                       && Max gradient norm &  {0.5, 1.0, 2.0} \\
                       && Number of epochs ($n_e$) &  50 \\
                    \cmidrule{2-4}
                    & \multirow{7}{*}{\begin{tabular}{l} Treatment \\ network \\ \midrule  Encoder \end{tabular}} 
                       & Neural CDE hidden layers ($J$) & 1 \\
                       && Learning rate ($\eta$) & {0.01, 0.001, 0.0001} \\
                       && Minibatch size & {64, 128, 256} \\
                       && Neural CDE hidden units ($d_h$) & 0.5$d_{yxa}$, 1$d_{yxa}$, 2$d_{yxa}$, 3$d_{yxa}$, 4$d_{yxa}$ \\
                       && Neural CDE dropout rate ($p$) & {0.1, 0.2} \\
                       && Max gradient norm & {0.5, 1.0, 2.0} \\
                       && Number of epochs ($n_e$) &  50 \\
                    \cmidrule{2-4}
                    & \multirow{7}{*}{\begin{tabular}{l} Decoder \end{tabular}} 
                       & Neural CDE hidden layers ($J$) & 1 \\
                       && Learning rate ($\eta$) & {0.01, 0.001, 0.0001} \\
                       && Minibatch size & {256, 512, 1024} \\
                       && Neural CDE hidden units ($d_h$) & 1$d_{yxa}$, 2$d_{yxa}$, 4$d_{yxa}$, 8$d_{yxa}$, 16$d_{yxa}$\\
                       && Neural CDE dropout rate ($p$) & {0.1, 0.2} \\
                       && Max gradient norm & {0.5, 1.0, 2.0, 4.0} \\
                       && Number of epochs ($n_e$) & 50 \\
                    
                    \bottomrule
                \end{tabular}
                \end{adjustbox}
    \vspace{-0.15cm}
    \caption{Following \citep{Melnychuk.2022}, we let ${d_{yxa}=d_y+d_x+d_a}$ be the overall input size. Further, $d_z$ is the hidden representation size of our \method, and corresponds to the balanced representation size of TE-CDE \citep{Seedat.2022},  CRN \citep{Bica.2020c}, and CT \citep{Melnychuk.2022}, and the LSTM output size of G-Net \citep{Li.2021}. 
    }
    \vspace{-0.3cm}
    \label{tab:hparams_cancer}
    \vspace{-0.3cm}
\end{table*}

\end{document}